\DeclareMathOperator{\Var}{Var}
\begin{document}

\title{Random forests with random projections of the output space for high
       dimensional multi-label classification}
\author{Arnaud Joly \and Pierre Geurts \and Louis Wehenkel}
\institute{Dept. of EE \& CS \& GIGA-R\\University of Liège, Belgium}
\maketitle

\begin{abstract}
We adapt the idea of random projections applied to the output space, so as to
enhance tree-based ensemble methods in the context of multi-label
classification. We show how learning time complexity can be reduced without
affecting computational complexity and accuracy of predictions. We also show
that random output space projections may be used in order to reach different
bias-variance tradeoffs, over a broad panel of benchmark problems, and that
this may lead to improved accuracy while reducing significantly the
computational burden of the learning stage.
\end{abstract}

\section{Introduction} \label{sec:introduction}

Within supervised learning, the goal of multi-label classification is to train
models to annotate objects with a subset of labels taken from a set of
candidate labels. Typical applications include the determination of topics
addressed in a text document, the identification of object categories
present within an image, or the prediction of biological properties of a
gene. In many applications, the number of candidate labels may be very large, ranging from hundreds to
hundreds of thousands~\cite{DBLP:conf/www/AgrawalGPV13} and often
even exceeding the sample size \cite{dekel2010multiclass}. The very
large scale nature of the output space in such problems poses both statistical
and computational challenges that need to be specifically addressed.

A simple approach to solve multi-label classification problems, called
binary relevance, is to train independently a binary classifier for
each label. Several more complex schemes have however been proposed to
take into account the dependencies between the labels (see, e.g.
\cite{read2009,DBLP:conf/nips/HsuKLZ09,DBLP:conf/icml/DembczynskiCH10,DBLP:journals/tkde/TsoumakasKV11,NIPS2013_5083,zhou2012multi}).  In
the context of tree-based methods, one way is to train multi-output
trees
\cite{blockeel-1998,DBLP:conf/icml/GeurtsWd06,DBLP:journals/pr/KocevVSD13},
ie. trees that can predict multiple outputs at once. With respect to
single-output trees \cite{DBLP:books/wa/BreimanFOS84}, the score
measure used in multi-output trees to choose splits is taken as the
sum of the individual scores corresponding to the different labels
(e.g., variance reduction) and each leaf is labeled with a vector of
values, coding each for the probability of presence of one label. With
respect to binary relevance, the multi-output tree approach has the
advantage of building a single model for all labels. It can thus
potentially take into account label dependencies and reduce memory
requirements for the storage of the models. An extensive experimental
comparison \cite{DBLP:journals/pr/MadjarovKGD12} shows that this
approach compares favorably with other approaches, including non
tree-based methods, both in terms of accuracy and computing times.
In addition, multi-output trees inherit all intrinsic
advantages of tree-based methods, such as robustness to irrelevant
features, interpretability through feature importance scores, or
fast computations of predictions, that make them very attractive to
address multi-label problems. The computational complexity of
learning multi-output trees is however similar to that of the binary
relevance method. Both approaches are indeed $O(p d n\log n)$, where
$p$ is the number of input features, $d$ the number of candidate
output labels, and $n$ the sample size; this is a limiting factor when
dealing with large sets of candidate labels.

One generic approach to reduce computational complexity is to apply some
compression technique prior to the training stage to reduce the number of
outputs to a number $m$ much smaller than the total number $d$ of labels. A model
can then be trained to make predictions in the compressed output space and a
prediction in the original label space can be obtained by decoding the
compressed prediction. As multi-label vectors are typically very sparse, one
can expect a drastic dimensionality reduction by using appropriate
compression techniques. This idea has been explored for example in
\cite{DBLP:conf/nips/HsuKLZ09} using compressed sensing, 
and in \cite{NIPS2013_5083} using bloom filters, in both cases using regularized
linear models as base learners. This approach obviously reduces computing
times for training the model. At the prediction stage however, the predicted
compressed output needs to be decoded, which adds
computational cost and can also introduce further decoding
errors.

In this paper, we explore the use of random output space projections
for large-scale multi-label classification in the context of tree-based
ensemble methods. We first explore the idea proposed for linear models in
\cite{DBLP:conf/nips/HsuKLZ09} with random
forests: a (single) random projection of the multi-label vector to an
$m$-dimensional random subspace is computed and then a multi-output random
forest is grown based on score computations using the projected outputs. We
exploit however the fact that the
approximation provided by a tree ensemble is a weighted average of output
vectors from the training sample to avoid the decoding stage: at training time all leaf
labels are directly computed  in the original multi-label space. We show theoretically and
empirically that when $m$ is large enough, ensembles grown on such random
output spaces are equivalent to ensembles grown on the original output
space. When $d$ is large enough compared to $n$, this idea hence may reduce
computing times at the learning stage without affecting accuracy and
computational complexity of predictions.

Next, we propose to exploit the randomization inherent to the
projection of the output space as a way to obtain randomized trees in
the context of ensemble methods: each tree in the ensemble is thus grown from a
different randomly projected subspace of dimension $m$. As previously,
labels at leaf nodes are directly computed in the original output space to avoid the
decoding step. We show, theoretically, that this idea can lead to better
accuracy than the first idea and, empirically, that best results are obtained on many problems with very low
values of $m$, which leads to significant computing time reductions at the
learning stage. In addition, we study the interaction between input randomization (\`a la
Random Forests) and output randomization (through random projections), showing
that there is an interest, both in terms of predictive performance and in terms of
computing times, to optimally combine these two ways of randomization. All
in all, the proposed approach constitutes a very attractive way to address
large-scale multi-label problems with tree-based ensemble methods.

The rest of the paper is structured as follows: Section \ref{sec:background}
reviews properties of multi-output tree ensembles and of random projections;
Section \ref{sec:methods} presents the proposed algorithms and their
theoretical properties; Section  \ref{sec:experiments} provides the empirical
validations, whereas Section  \ref{sec:conclusions} discusses our work and
provides further research directions.

\section{Background}
\label{sec:background}

We denote by $\mathcal{X}$ an input space, and by $\mathcal{Y}$ an output
space; without loss of generality, we suppose that
$\mathcal{X} = \mathbb{R}^{p}$ (where $p$ denotes the number of input features),
and that $\mathcal{Y} = \mathbb{R}^{d}$ (where $d$ is the dimension of the
output space). We denote by $P_{{\cal X},{\cal Y}}$ the joint (unknown)
sampling density over $\mathcal{X} \times \mathcal{Y}$.

Given a learning sample $\left((x^i, y^i) \in
(\mathcal{X} \times \mathcal{Y})\right)_{i=1}^n$ of $n$ observations in the
form of input-output pairs, a supervised learning task is defined as
searching for a function $f^{*} : \mathcal{X} \rightarrow \mathcal{Y}$ in a
hypothesis space $\mathcal{H} \subset \mathcal{Y}^\mathcal{X}$ that minimizes
the expectation of some loss function $\ell : \mathcal{Y} \times \mathcal{Y}
\rightarrow \mathbb{R}$ over the joint distribution of input / output pairs: $f^{*} \in
\arg \min_{f \in \mathcal{H}} E_{P_{{\cal X},{\cal Y}}} \left\{ \ell(f(x), y) \right\}.
$

NOTATIONS: Superscript indices ($x^{i}, y^{i}$) denote (input, output) vectors
of an observation $i \in \{1, \ldots , n\}$. Subscript indices
(e.g. $x_{j}, y_{k}$) denote components of vectors.

\subsection{Multi-output tree ensembles}
\label{sec:algorithm-description}

A classification or a regression tree~\cite{DBLP:books/wa/BreimanFOS84}
is built using all the input-output pairs as follows: for
each node at which the subsample size
is greater or equal to a pre-pruning parameter $n_{\min}$, the best split
is chosen among the $p$ input features
combined with the selection of an optimal cut point.
The best sample split $(S_{r}, S_{l})$ of the local subsample $S$ minimizes the average
reduction of impurity
\begin{align}
\hspace*{-0.5mm}&\hspace*{-0.5mm}\Delta I((y^i)_{i \in S}, (y^i)_{i \in S_l}, (y^i)_{i \in S_r}) =
\hspace*{-0.5mm}&\hspace*{-0.5mm} I(\hspace*{-0.3mm}(y^i)_{i \in S}\hspace*{-0.3mm}) \hspace*{-0.9mm}- \hspace*{-0.9mm}\frac{|S_l|}{|S|} I(\hspace*{-0.3mm}(y^i)_{i \in S_l}\hspace*{-0.3mm})
\hspace*{-0.9mm}- \hspace*{-0.9mm}\frac{|S_r|}{|S|} I(\hspace*{-0.3mm}(y^i)_{i \in S_r}\hspace*{-0.3mm}).
\label{eq:impurity_reduction}
\end{align}
Finally, leaf statistics are obtained by aggregating the outputs of the samples reaching
that leaf.

In this paper, for multi-output trees, we use the sum of the variances of the
$d$ dimensions of the output vector as an impurity measure. It can be computed
by (see Appendix~A, in the supplementary material\footnote{
\url{static.ajoly.org/files/ecml2014-supplementary.pdf}})
\begin{align}\label{multi-var}
\Var((y^{i})_{i \in S})
&= \frac{1}{|S|} \sum_{i \in S} ||y^i - \frac{1}{|S|} \sum_{i\in S} y^i||^2, \\
&= \frac{1}{2|S|^2} \sum_{i\in S} \sum_{j\in S} ||y^i - y^j||^2.
\end{align}
\noindent Furthermore, we compute the vectors of output statistics by
component-wise averaging. Notice that, when the outputs are vectors of binary
class-labels (i.e. $y\in \{0,1\}^{d}$), as in multi-label classification, the
variance reduces to the so-called Gini-index, and the leaf statistics then
estimate a vector of conditional probabilities $P(y_{j} = 1 | x \in leaf)$,
from which a prediction $\hat{y}$ can be made by thresholding.

Tree-based ensemble methods build an ensemble of $t$ randomized trees.
Unseen samples are then predicted by aggregating the predictions of all $t$ trees.
Random Forests \cite{DBLP:journals/ml/Breiman01} build each tree on
a bootstrap copy of the learning sample \cite{DBLP:journals/ml/Breiman01} and by optimising
the split at each node over a locally generated random subset of size $k$  among
the $p$ input features.
Extra Trees \cite{DBLP:journals/ml/GeurtsEW06} use the complete learning sample
and optimize the split over a random subset of size $k$ of
the $p$ features combined with a random selection of cut points.
Setting the parameter $k$
to the number of input features $p$ allows to filter out irrelevant
features; larger $n_{\min}$ yields simpler trees possibly at the price
of higher bias, and the higher $t$ the smaller the variance of the resulting predictor.

\subsection{Random projections}
\label{sec:rp}

In this paper we apply the idea of random projections to samples of vectors of
the output space $\mathcal{Y}$. With this in mind, we recall the
Johnson-Lindenstrauss lemma (reduced to linear maps), while using our notations.

\begin{lemma}{Johnson-Lindenstrauss lemma~\cite{johnson1984extensions}}
\label{lemma:jl-lemma}
Given $\epsilon > 0$ and an integer $n$, let $m$ be a positive integer such
that  $m \geq 8 \epsilon^{-2} \ln {n}$. For any sample $(y^i)_{i=1}^{n}$ of $n$ points
in $\mathbb{R}^d$ there exists a matrix $\Phi \in \mathbb{R}^{m \times d}$
such that for all $i, j  \in \{1, \ldots , n\}$
\begin{equation}\label{eqn:js}
(1 \hspace*{-0.3mm} - \hspace*{-0.3mm}\epsilon) ||y^i \hspace*{-0.3mm}- \hspace*{-0.3mm}y^j||^2 \leq || \Phi y^i \hspace*{-0.3mm}- \hspace*{-0.3mm}\Phi y^j ||^2
                           \leq (1\hspace*{-0.3mm} +\hspace*{-0.3mm} \epsilon) || y^i \hspace*{-0.3mm}- \hspace*{-0.3mm}y^{j}||^2.
\end{equation}
\end{lemma}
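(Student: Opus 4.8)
The natural route is the probabilistic method: rather than exhibiting $\Phi$ explicitly, I would draw it at random and show that the two-sided bound \eqref{eqn:js} holds for all pairs simultaneously with strictly positive probability, so that at least one realization does the job. Concretely, I would take the entries $\Phi_{kl}$ to be independent $N(0,1/m)$ variables; a random sign matrix scaled by $1/\sqrt{m}$ would serve equally well, the only things that matter below being the first two moments and independence.

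The first reduction is to a statement about a single fixed vector. For a fixed pair $i \neq j$, set $u = y^i - y^j$; by linearity $\Phi y^i - \Phi y^j = \Phi u$, so \eqref{eqn:js} amounts to $\|\Phi u\|^2 \in (1\pm\epsilon)\|u\|^2$. A one-line computation gives $\E[\|\Phi u\|^2] = \|u\|^2$, and, writing $\Phi u$ coordinate-wise, the rescaled quantity $m\|\Phi u\|^2/\|u\|^2$ is a sum of $m$ independent squared standard Gaussians, i.e.\ a chi-squared variable with $m$ degrees of freedom. Thus the whole lemma reduces to the concentration of a chi-squared random variable around its mean.

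The technical core --- and the step I expect to be the main obstacle --- is the chi-squared tail bound together with the bookkeeping of constants. Using the standard Chernoff/sub-gamma estimate one obtains, for $\epsilon \in (0,1)$,
\[
\Pr\!\left[\,\bigl|\,\|\Phi u\|^2 - \|u\|^2\,\bigr| > \epsilon\|u\|^2\,\right]
\le 2\exp\!\left(-\tfrac{m}{4}(\epsilon^2 - \epsilon^3)\right),
\]
which for small $\epsilon$ it is convenient to read as roughly $2\exp(-m\epsilon^2/8)$. I would then finish with a union bound over the $\binom{n}{2}$ pairs: the probability that \emph{some} pair violates \eqref{eqn:js} is at most $\binom{n}{2}\cdot 2\exp(-m\epsilon^2/8)$, and it suffices to choose $m$ large enough to push this strictly below $1$. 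Since $\ln\binom{n}{2} < 2\ln n$ while the per-pair exponent scales like $m\epsilon^2/8$, matching the two is exactly what forces the threshold $m \ge 8\epsilon^{-2}\ln n$; getting the constants to line up cleanly --- in particular tracking the cubic $(1-\epsilon)$ factor absorbed above --- is the only delicate point, the remainder being moment computations and a routine Chernoff argument. Once the failure probability is $<1$, its complementary event is nonempty, and any $\Phi$ lying in it satisfies \eqref{eqn:js} for all $i,j$, which proves the lemma.
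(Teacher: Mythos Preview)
The paper does not actually prove Lemma~\ref{lemma:jl-lemma}: it is stated as background and attributed to \cite{johnson1984extensions}, with no proof given in the body or the appendices. There is therefore no ``paper's own proof'' to compare against.

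That said, your proposal is the standard modern proof (essentially the Dasgupta--Gupta argument): draw $\Phi$ with i.i.d.\ $N(0,1/m)$ entries, reduce the pairwise statement to concentration of $\|\Phi u\|^2/\|u\|^2$, recognize this as $\chi^2_m/m$, apply a Chernoff-type tail bound, and finish with a union bound over the $\binom{n}{2}$ pairs. This is correct and is exactly the route one would expect. Your own caveat about the constants is the right place to be careful: with the bound $2\exp(-m(\epsilon^2-\epsilon^3)/4)$ and $m=8\epsilon^{-2}\ln n$, the union bound gives roughly $n^2\cdot n^{-2(1-\epsilon)}$, which is not below~$1$ without an additional restriction on $\epsilon$ or a slightly sharper chi-squared estimate. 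This is a well-known artefact --- different sources state the lemma with slightly different leading constants or with an implicit $\epsilon<1/2$ --- and does not affect the substance of the argument, but if you want the stated threshold $8\epsilon^{-2}\ln n$ to go through verbatim you will need to tighten the tail inequality (e.g.\ keep the full $\epsilon-\ln(1+\epsilon)$ exponent rather than its two-term Taylor truncation) or add a mild constraint on $\epsilon$.
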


Moreover, when $d$ is sufficiently large, several random matrices satisfy
(\ref{eqn:js}) with high probability. In particular, we can consider Gaussian
matrices which elements are drawn  {\em i.i.d.} in $\mathcal{N}(0, 1 / m)$, as
well as (sparse) Rademacher matrices which elements are drawn in
$\left\{ -\sqrt{\frac{s}{m}}, 0, \sqrt{\frac{s}{m}} \right\}$ with probability
$\left\{ \frac{1}{2s}, 1 - \frac{1}{s} ,\frac{1}{2s}\right\}$, where
$1 / s \in (0,1]$ controls the sparsity of
$\Phi$~\cite{DBLP:journals/jcss/Achlioptas03,DBLP:conf/kdd/LiHC06}.

Notice that if some $\Phi$ satisfies (\ref{eqn:js}) for the whole learning
sample, it obviously satisfies (\ref{eqn:js}) for any subsample that could
reach a node during regression tree growing. On the other hand, since we are
not concerned in this paper with the `reconstruction' problem, we do not need
to make any sparsity assumption `\`{a} la compressed sensing'.


\section{Methods}
\label{sec:methods}


We first present how we propose to exploit random projections to reduce the computational burden of learning single
multi-output trees in very high-dimensional output spaces. Then we present and compare two
ways to exploit this idea with ensembles of trees. Subsection~
\ref{sec:biasvar} analyses these two ways from the bias/variance point of view.

\subsection{Multi-output regression trees in randomly projected output spaces}
\label{sec:theoretical-analysis}

The multi-output single tree algorithm described in section~2 requires the computation
of the sum of variances in (\ref{multi-var}) at each tree node and for each candidate
split. When $\mathcal{Y}$ is very high-dimensional, this computation constitutes
the main computational bottleneck of the algorithm. We
thus propose to approximate variance computations by using random projections of the
output space. The multi-output regression tree algorithm is modified as
follows (denoting by $LS$ the  learning sample $((x^i,y^i))_{i=1}^n$):
\begin{itemize}
\item First, a projection matrix $\Phi$ of dimension $m\times d$ is
  randomly generated.
\item A new dataset $LS_m=((x^i,\Phi y^i))_{i=1}^n$ is constructed by projecting
  each learning sample output using the projection matrix $\Phi$.
\item A tree (structure) $\cal T$ is grown using the projected learning sample
  $LS_m$.
\item Predictions $\hat{y}$ at each leaf of $\cal T$ are computed using the
      corresponding outputs in the original output space.
\end{itemize}
The resulting tree is exploited in the standard way to make predictions:
an input vector $x$ is propagated through the tree until it
reaches a leaf from which a prediction $\hat{y}$ in the original output space is directly retrieved.

If $\Phi$ satisfies (\ref{eqn:js}), the following theorem shows that variance computed in the projected
subspace is an $\epsilon$-approximation of the variance computed over the original space.

\begin{theorem}
\label{thm:var-jl-lemma}
Given $\epsilon > 0$, a sample $(y^i)_{i=1}^{n}$ of $n$ points $y \in \mathbb{R}^d$, and a projection matrix $\Phi\in
\mathbb{R}^{m\times d}$ such that for all $i, j  \in \{1, \ldots , n\}$ condition (\ref{eqn:js}) holds, we have also:
\begin{equation}
(1 - \epsilon) \Var((y^i)_{i=1}^n)  \leq \Var((\Phi y^i)_{i=1}^n)
                                    \leq (1 + \epsilon) \Var((y^i)_{i=1}^n).
\end{equation}
\end{theorem}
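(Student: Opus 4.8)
The plan is to use the second formula for the variance given in (\ref{multi-var}), namely the pairwise-distance representation
\[
\Var((y^{i})_{i=1}^{n}) = \frac{1}{2n^{2}} \sum_{i=1}^{n} \sum_{j=1}^{n} \|y^{i} - y^{j}\|^{2},
\]
which expresses the variance purely in terms of squared Euclidean distances between pairs of sample points. This is the key structural fact, because condition (\ref{eqn:js}) of the Johnson-Lindenstrauss lemma controls precisely these pairwise squared distances: for every pair $(i,j)$ we have $(1-\epsilon)\|y^{i}-y^{j}\|^{2} \le \|\Phi y^{i}-\Phi y^{j}\|^{2} \le (1+\epsilon)\|y^{i}-y^{j}\|^{2}$.

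First I would write down the same pairwise expression for the projected sample,
\[
\Var((\Phi y^{i})_{i=1}^{n}) = \frac{1}{2n^{2}} \sum_{i=1}^{n} \sum_{j=1}^{n} \|\Phi y^{i} - \Phi y^{j}\|^{2},
\]
noting that $\Phi y^{i}$ is itself just a point in $\mathbb{R}^{m}$, so the formula applies verbatim to the projected data. Then I would substitute the per-pair inequalities from (\ref{eqn:js}) term by term into this double sum. Since all the coefficients $\frac{1}{2n^{2}}$ are nonnegative and every summand $\|y^{i}-y^{j}\|^{2}$ is nonnegative, the inequalities are preserved under summation: the lower bound $(1-\epsilon)\|y^{i}-y^{j}\|^{2}$ summed over all pairs yields $(1-\epsilon)\Var((y^{i})_{i=1}^{n})$, and likewise the upper bound yields $(1+\epsilon)\Var((y^{i})_{i=1}^{n})$. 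Factoring the constant $(1\pm\epsilon)$ out of the sum gives the claimed two-sided bound immediately.

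The only real subtlety is making sure the pairwise representation of the variance is the one used, rather than the mean-deviation form; the equivalence of the two formulas is already established by (\ref{multi-var}) (proved in Appendix~A), so I would simply invoke it. The argument then reduces to the elementary observation that a linear combination with nonnegative weights of a family of two-sided inequalities is again a two-sided inequality with the same multiplicative constants. There is no genuine obstacle here: the entire content of the theorem is packaged in the pairwise-distance identity plus monotonicity of summation, and the diagonal terms ($i=j$) contribute zero on both sides and so cause no trouble. I would keep the writeup to a few lines.
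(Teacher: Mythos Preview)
Your proposal is correct and matches the paper's own proof essentially line for line: the paper also invokes the pairwise-distance identity $\Var((y^{i})_{i=1}^{n}) = \frac{1}{2n^{2}}\sum_{i,j}\|y^{i}-y^{j}\|^{2}$ (established in Appendix~A), sums the Johnson--Lindenstrauss inequality (\ref{eqn:js}) over all pairs, and scales by $1/(2n^{2})$ to recover the variance bounds. There is nothing to add.
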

\begin{proof}
See Appendix~B, supplementary material.
\end{proof}
As a consequence, any split score approximated from the randomly
projected output space will be $\epsilon$-close to the unprojected
scores in any subsample of the complete learning sample. Thus, if
condition (\ref{eqn:js}) is satisfied for a sufficiently small
$\epsilon$ then the tree grown from the projected data will be
identical to the tree grown from the original
data\footnote{Strictly speaking, this is only the case
when the optimum scores of test splits as computed over the
original output space are isolated, i.e. when there is only one
single best split, no tie.}.

For a given size $m$ of the projection subspace, the complexity is
reduced from $O(dn)$ to $O(mn)$ for the computation of one split score
and thus from $O(d p n\log n)$ to $O(m p n\log n)$ for the
construction of one full (balanced) tree, where one can expect $m$ to
be much smaller than $d$ and at worst of $O(\epsilon^{-2}
\log{n})$. The whole procedure requires to generate the projection
matrix and to project the training data. These two steps are
respectively $O(d m)$ and $O(n d m)$ but they can often be
significantly accelerated by exploiting the sparsity of the projection
matrix and/or of the original output data, and they are called only
once before growing the tree.

All in all, this means that when $d$ is sufficiently large, the random
projection approach may allow us to significantly reduce tree building
complexity from $O(d t p n \log n)$ to $O(m t p n \log
  n + t n d m)$, without impact on predictive accuracy (see section
4, for empirical results).

\subsection{Exploitation in the context of tree ensembles}

The idea developed in the previous section can be directly exploited  in the
context of ensembles of randomized
multi-output regression trees. Instead of building a single tree from the
projected learning sample $LS_m$, one can grow a randomized ensemble of them.
This ``shared subspace'' algorithm is described in pseudo-code in
Algorithm~\ref{alg:output-fix-subspace-tree-ensemble}.

\begin{algorithm}
\caption{Tree ensemble on a single shared subspace $\Phi$}
\label{alg:output-fix-subspace-tree-ensemble}

\begin{algorithmic}
\REQUIRE{$t$, the ensemble size}
\REQUIRE{$((x^i, y^i) \in (\mathbb{R}^p \times \mathbb{R}^d))_{i=1}^n$, the
         input-output pairs}
\REQUIRE{A tree building algorithm.}
\REQUIRE{A sub-space generator}

\STATE{Generate a sub-space $\Phi \in \mathbb{R}^{m \times d}$;}

\FOR{$j = 1$ to $t$}
    \STATE{Build a tree structure $\mathcal{T}_{j}$ using $((x^i, \Phi y^i))_{i=1}^n$;}
    \STATE{Label the leaves of $\mathcal{T}_{j}$ using $((x^i, y^i))_{i=1}^n$;}
    \STATE{Add the labelled tree $\mathcal{T}_{j}$ to the ensemble;}
\ENDFOR
\end{algorithmic}
\end{algorithm}

Another idea is to exploit the random
projections used so as to introduce a novel kind of diversity among the
different trees of an ensemble. Instead of building all the trees of the ensemble from a
same shared output-space projection, one could instead grow each tree in the ensemble from a
different output-space projection. Algorithm~\ref{alg:output-subspace-tree-ensemble}
implements this idea in pseudo-code. The randomization introduced by the output space
projection can of course be combined with any existing randomization scheme
to grow ensembles of trees. In this paper, we will consider the combination of random projections
with the randomizations already introduced in Random Forests and Extra Trees.
The interplay between these different randomizations will be
discussed theoretically in the next subsection by a bias/variance analysis and
empirically in Section~\ref{sec:experiments}. Note that while when looking at
single trees or shared ensembles, the size $m$ of the projected subspace should not be too small so
that condition (\ref{eqn:js}) is satisfied, the optimal value of $m$ when projections
are randomized at each tree is likely to be smaller, as suggested by the bias/variance
analysis in the next subsection.

\begin{algorithm}
\caption{Tree ensemble with individual subspaces $\Phi_{j}$}
\label{alg:output-subspace-tree-ensemble}

\begin{algorithmic}
\REQUIRE{$t$, the ensemble size}
\REQUIRE{$((x^i, y^i) \in (\mathbb{R}^p \times \mathbb{R}^d))_{i=1}^n$, the
         input-output pairs}
\REQUIRE{A tree building algorithm.}
\REQUIRE{A sub-space generator}

\FOR{$j = 1$ to $t$}
    \STATE{Generate a sub-space $\Phi_{j} \in \mathbb{R}^{m \times d}$;}
    \STATE{Build a tree structure $\mathcal{T}_{j}$ using $((x^i, \Phi_{j} y^i))_{i=1}^n$;}
    \STATE{Label the leaves  of $\mathcal{T}_{j}$  using $((x^i, y^i))_{i=1}^n$;}
    \STATE{Add the labelled tree $\mathcal{T}_{j}$  to the ensemble;}
\ENDFOR
\end{algorithmic}
\end{algorithm}

From the computational point of view, the main difference between these two
ways of transposing random-output projections to ensembles of trees is that in
the case of Algorithm~\ref{alg:output-subspace-tree-ensemble}, the generation
of the projection matrix $\Phi$ and the computation of projected outputs is
carried out $t$ times, while it is done only once for the case of
Algorithm~\ref{alg:output-fix-subspace-tree-ensemble}. These aspects will
be empirically evaluated in Section~4.

\subsection{Bias/variance analysis}\label{sec:biasvar}

In this subsection, we adapt the bias/variance analysis carried out in
\cite{DBLP:journals/ml/GeurtsEW06} to take into account random output
projections. The details of the derivations are
reported in Appendix~C (supplementary material).

Let us denote by $f(.;ls,\phi,\epsilon):{\cal X}\rightarrow \mathbb{R}^d$ a
single multi-output tree obtained from a projection matrix $\phi$
(below we use $\Phi$ to denote the corresponding random variable), where
$\epsilon$ is the value of a random variable $\varepsilon$ capturing the random
perturbation scheme used to build this tree (e.g., bootstrapping and/or random
input space selection). The square error of this model at some point $x\in{\cal X}$ is defined by:
$$Err(f(x;ls,\phi,\epsilon))\stackrel{\text{def}}{=}E_{Y|x}\{||Y-f(x;ls,\phi,\epsilon\})||^2\},$$
and its average can decomposed in its residual error, (squared) bias, and
variance terms denoted:
$$E_{LS,\Phi,\varepsilon}\{Err(f(x;LS,\Phi,\varepsilon))\}=\sigma^2_R(x)+B^2(x)+V(x)$$
where the variance term $V(x)$ can be further decomposed as the sum of the
following three terms:
\begin{eqnarray*}
V_{LS}(x)&=&\Var_{LS}\{E_{\Phi,\varepsilon|LS}\{f(x;LS,\Phi,\varepsilon)\}\}\\
V_{Algo}(x)&=&E_{LS}\{E_{\Phi|LS}\{\Var_{\varepsilon|LS,\Phi}\{f(x;LS,\Phi,\varepsilon)\}\}\},\\
V_{Proj}(x)&=&E_{LS}\{\Var_{\Phi|LS}\{E_{\varepsilon|LS,\Phi}\{f(x;LS,\Phi,\varepsilon)\}\}\},
\end{eqnarray*}
that measure errors due to the randomness of, respectively, the learning
sample, the tree algorithm, and the output space projection (Appendix~C, supplementary material).

Approximations computed respectively by algorithms 1 and 2 take the following forms:\vspace*{-2mm}
\begin{itemize}
\item $f_1(x;ls,\epsilon^t,\phi)=\frac{1}{t}\sum_{i=1}^t f(x;ls,\phi, \epsilon_i)$
\item $f_2(x;ls,\epsilon^t,\phi^t)=\frac{1}{t}\sum_{i=1}^t f(x;ls,\phi_i,\epsilon_i),$\vspace*{-2mm}
\end{itemize}
where $\epsilon^t=(\epsilon_1,\ldots,\epsilon_{t})$ and
$\phi^t=(\phi_1,\ldots,\phi_t)$ are vectors of i.i.d. values of the random
variables $\varepsilon$ and $\Phi$ respectively.

We are interested in comparing the average errors of these two algorithms,
where the average is taken over all random parameters (including the
learning sample). We show (Appendix~C) that these can be decomposed as
follows:
\begin{eqnarray*}
&&\hspace*{-5mm}E_{LS,\Phi,\varepsilon^t}\{Err(f_1(x;LS,\Phi,\varepsilon^t))\}\\
&&\hspace{-0.2cm}=\sigma^2_R(x)+B^2(x)+V_{LS}(x)+\frac{V_{Algo}(x)}{t}+V_{Proj}(x),\\
&&\hspace*{-5mm}E_{LS,\Phi^t,\varepsilon^t}\{Err(f_2(x;LS,\Phi^t,\varepsilon^t))\}\\
&&\hspace{-0.2cm} =\sigma^2_R(x)+B^2(x)+V_{LS}(x)+\frac{V_{Algo}(x)+V_{Proj}(x)}{t}.
\end{eqnarray*}
From this result, it is hence clear that Algorithm 2 can not be worse, on the
average, than Algorithm 1. If the additional computational burden needed to
generate a different random projection for each tree is not problematic, then
Algorithm 2 should always be preferred to Algorithm 1.

For a fixed level of tree randomization ($\varepsilon$), whether the additional
randomization brought by random projections could be beneficial in terms of
predictive performance remains an open question that will be addressed
empirically in the next section. Nevertheless, with respect to an ensemble
grown from the original output space, one can expect that the
output-projections will always increase the bias term, since they disturb the algorithm
in its objective of reducing the errors on the learning sample. For small
values of $m$, the average error will therefore decrease (with a sufficiently
large number $t$ of trees) only if the increase in bias is
compensated by a decrease of variance.

The value of $m$, the dimension of the projected subspace, that will lead to the
best tradeoff between bias and variance will hence depend both on the level of tree
randomization and on the learning problem. The more (resp. less)
tree~randomization, the higher (resp. the lower) could be the optimal value of $m$,
since both randomizations affect bias and variance in the same direction.

\section{Experiments} \label{sec:experiments}

\subsection{Accuracy assessment protocol} \label{sec:experimental-protocol}

We assess the accuracy of the predictors for multi-label classification on a
test sample (TS) by the  ``Label Ranking Average
Precision (LRAP)'' \cite{DBLP:journals/pr/MadjarovKGD12}, expressed by
\begin{equation}
\text{LRAP}( \hat{f})
= \frac{1}{|TS|} \sum_{i\in TS} \frac{1}{|y^i|} \hspace*{-1mm}\sum_{j \in \{k : y^i_{k}=1\}} \hspace*{-1mm}
\frac{|\mathcal{L}_j^{i}(y^i)|}{|\mathcal{L}_j^{i}(1_d)|},
\end{equation}
\noindent
where
$\hat{f}(x^i)_j$ is the probability (or the score) associated to the label $j$
by the learnt model $\hat{f}$ applied to $x^i$, $1_d$ is a
$d$-dimensional row vector of ones, and
$$\mathcal{L}^i_{j}(q) = \left\{ k : q_{k}=1 \mbox{~and~} \hat{f}(x^i)_k
\geq \hat{f}(x^i)_j\right\}.$$
Test samples without any relevant labels (i.e. with $|y^i| = 0$) were discarded
prior to computing the average
precision. The best possible average precision is thus 1. Notice that we use
indifferently the notation $|\cdot |$ to express the cardinality of a set or
the $1$-norm of a vector.

\begin{figure}[htb]
\centering
\includegraphics[width=0.45\textwidth]{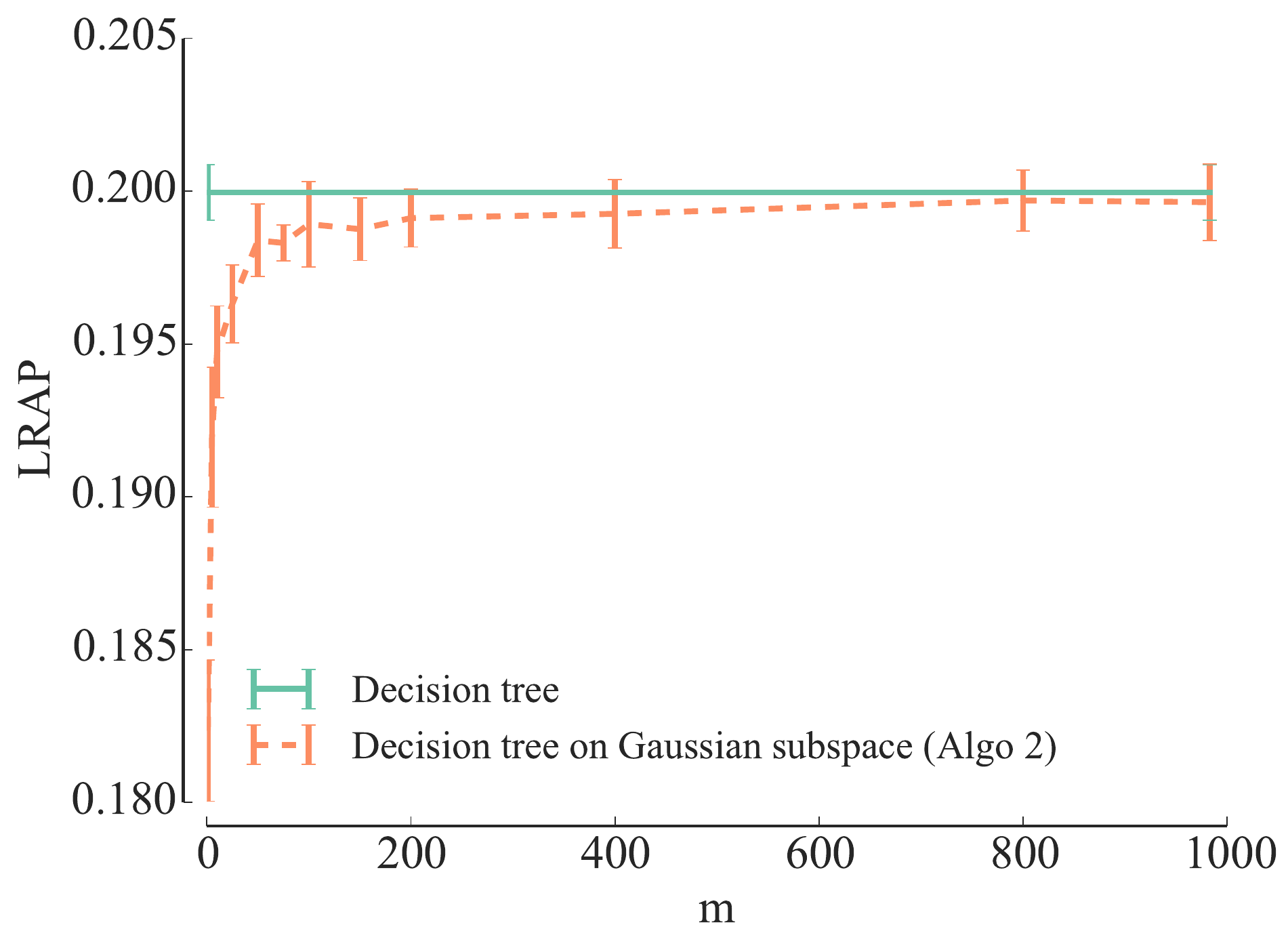}
\includegraphics[width=0.45\textwidth]{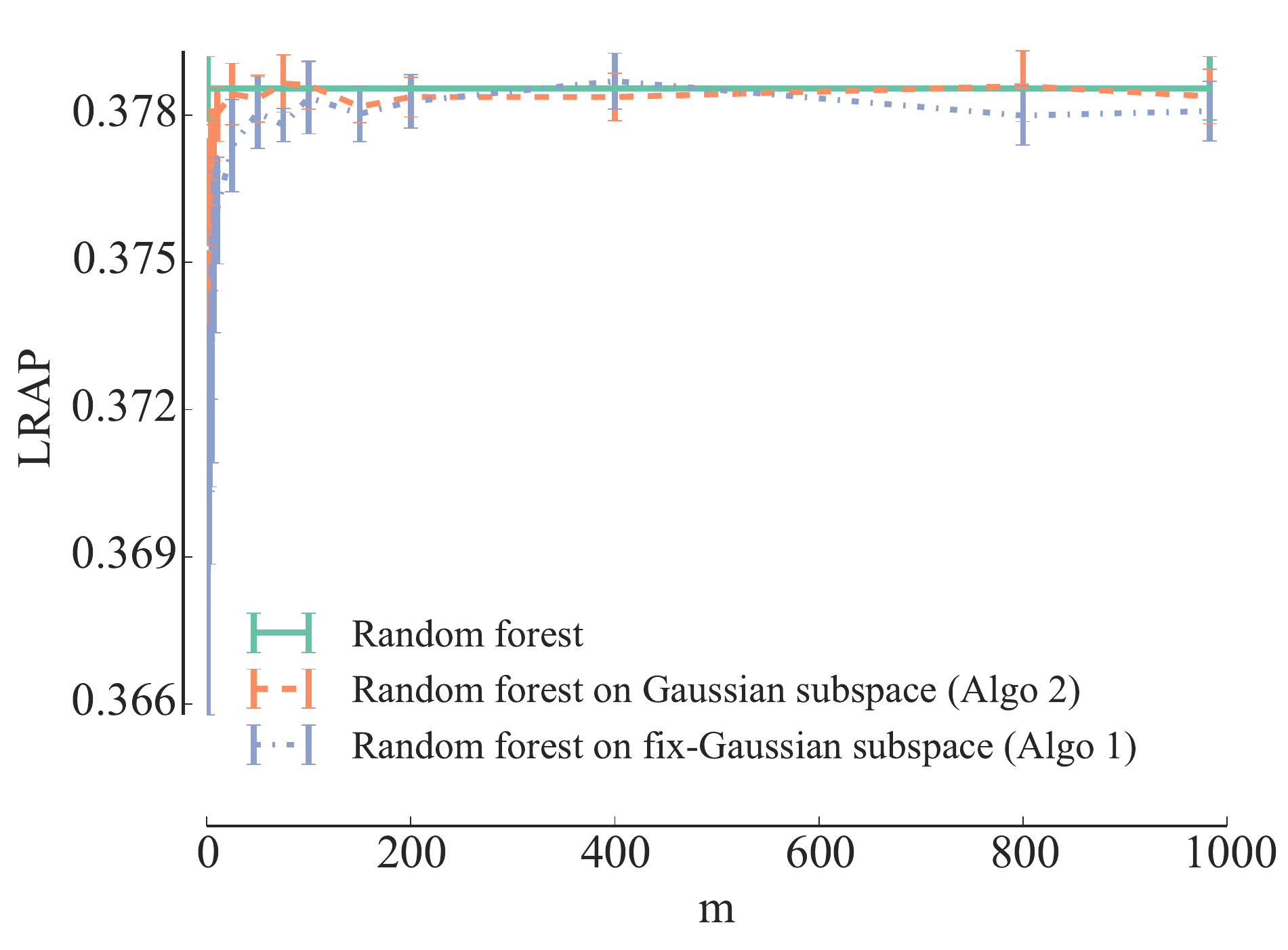}
\caption{Models built for the ``Delicious'' dataset ($d=983$) for growing
         numbers $m$ of Gaussian projections. Left: single unpruned CART
         trees ($n_{\min}=1$); Right: Random Forests ($k=\sqrt{p}$, $t=100$,
         $n_{\min}=1$). The curves represent average values (and standard
         deviations) obtained from 10 applications of the randomised algorithms
         over a same single $LS/TS$ split.}
\label{fig:delicious_vs_n_proj}
\end{figure}

\begin{table}[ht]
\centering
\caption{High output space compression ratio is possible, with no or negligible
  average precision reduction ($t=100$, $n_{\min}=1$, $k=\sqrt{p}$). Each
  dataset has $n_{LS}$ training samples, $n_{TS}$ testing samples, $p$ input
  features and $d$ labels. Label ranking average precisions are displayed in
  terms of their mean values and standard deviations over 10 random $LS/TS$
  splits, or over the 10 folds of cross-validation. Mean scores in the last
  three columns are underlined if they show a difference with respect to the
  standard Random Forests of more than one standard deviation.}
\renewcommand{\tabcolsep}{1.5mm}
\rotatebox{90}{
\begin{tabular}{@{} l rrrr r lll ll ll ll  @{}}
\toprule
\multicolumn{5}{c}{Datasets}                                        & \multicolumn{2}{c}{Random}      & \multicolumn{6}{c}{Random Forests on Gaussian sub-space } \\
\cmidrule(r){1-5} \cmidrule(r){8-9}\cmidrule(r){10-11} \cmidrule(r){12-13}
Name     & $n_{LS}$ & $n_{TS}$ &      $p$ &        $d$ &      \multicolumn{2}{c}{Forests}              & \multicolumn{2}{c}{$m=1$} & \multicolumn{2}{c}{$m\hspace*{-0.9mm}=\hspace*{-0.9mm}\lfloor\hspace*{-0.4mm} 0.5\hspace*{-0.8mm} +\hspace*{-0.8mm} \ln{d}\hspace*{-0.4mm}\rfloor$} & \multicolumn{2}{c}{$m=d$}\\
\midrule
emotions            &      391 &      202 &       72 &         6 & $0.800   $ &$\hspace*{-1.5mm} \pm 0.014$ & {$0.800$} & $\hspace*{-1.5mm} \pm 0.010 $ & $0.810  $ & $\hspace*{-1.5mm} \pm 0.014$ & $0.810  $ & $\hspace*{-1.5mm} \pm 0.016$ \\
scene               &     1211 &     1196 &     2407 &         6 & $0.870 $ & $\hspace*{-1.5mm} \pm 0.003$ & {$0.875$} & $\hspace*{-1.5mm} \pm 0.007$ & $0.872 $ & $\hspace*{-1.5mm} \pm 0.004$ & $0.872 $ & $\hspace*{-1.5mm} \pm 0.004$ \\
yeast               &     1500 &      917 &      103 &        14 & $0.759 $ & $\hspace*{-1.5mm} \pm 0.008$ & \dotuline{$ 0.748 $} & $\hspace*{-1.5mm} \pm 0.006$ & {$0.755$} & $\hspace*{-1.5mm} \pm 0.004$ & $0.758 $ & $\hspace*{-1.5mm} \pm 0.005$ \\
tmc2017             &    21519 &     7077 &    49060 &        22 & $0.756 $ & $\hspace*{-1.5mm} \pm 0.003$ & \dotuline{$ 0.741 $} & $\hspace*{-1.5mm} \pm 0.003$ & \dotuline{$ 0.748 $} & $\hspace*{-1.5mm} \pm 0.003$ & $0.757 $ & $\hspace*{-1.5mm} \pm 0.003$ \\
genbase             &      463 &      199 &     1186 &        27 & $0.992 $ & $\hspace*{-1.5mm} \pm 0.004$ & {$0.994$} & $\hspace*{-1.5mm} \pm 0.002$ & $0.994 $ & $\hspace*{-1.5mm} \pm 0.004$ & $0.993 $ & $\hspace*{-1.5mm} \pm 0.004$ \\
reuters             &     2500 &     5000 &    19769 &        34 & $0.865 $ & $\hspace*{-1.5mm} \pm 0.004$ & {$0.864$} & $\hspace*{-1.5mm} \pm 0.003$ & $0.863 $ & $\hspace*{-1.5mm} \pm 0.004$ & $0.862 $ & $\hspace*{-1.5mm} \pm 0.004$ \\
medical             &      333 &      645 &     1449 &        45 & $0.848 $ & $\hspace*{-1.5mm} \pm 0.009$ & \dotuline{$ 0.836 $} & $\hspace*{-1.5mm} \pm 0.011$ & {$0.842$} & $\hspace*{-1.5mm} \pm 0.014$ & $0.841 $ & $\hspace*{-1.5mm} \pm 0.009$ \\
enron               &     1123 &      579 &     1001 &        53 & $0.683 $ & $\hspace*{-1.5mm} \pm 0.009$ & {$0.680$} & $\hspace*{-1.5mm} \pm 0.006$ & $0.685 $ & $\hspace*{-1.5mm} \pm 0.009$ & $0.686 $ & $\hspace*{-1.5mm} \pm 0.008$ \\
mediamill           &    30993 &    12914 &      120 &       101 & $0.779 $ & $\hspace*{-1.5mm} \pm 0.001$ & \dotuline{$ 0.772 $} & $\hspace*{-1.5mm} \pm 0.001$ & {$0.777$} & $\hspace*{-1.5mm} \pm 0.002$ & $0.779 $ & $\hspace*{-1.5mm} \pm 0.002$ \\
Yeast-GO            &     2310 &     1155 &     5930 &       132 & $0.420  $ & $\hspace*{-1.5mm} \pm 0.010 $ & \dotuline{$ 0.353 $} & $\hspace*{-1.5mm} \pm 0.008$ & \dotuline{$ 0.381 $} & $\hspace*{-1.5mm} \pm 0.005$ & $0.420  $ & $\hspace*{-1.5mm} \pm 0.010$ \\
bibtex              &     4880 &     2515 &     1836 &       159 & $0.566 $ & $\hspace*{-1.5mm} \pm 0.004$ & \dotuline{$ 0.513 $} & $\hspace*{-1.5mm} \pm 0.006$ & \dotuline{$ 0.548 $} & $\hspace*{-1.5mm} \pm 0.007$ & $0.564 $ & $\hspace*{-1.5mm} \pm 0.008$ \\
CAL500              &      376 &      126 &       68 &       174 & $0.504 $ & $\hspace*{-1.5mm} \pm 0.011$ & {$0.504$} & $\hspace*{-1.5mm} \pm 0.004$ & $0.506 $ & $\hspace*{-1.5mm} \pm 0.007$ & $0.502 $ & $\hspace*{-1.5mm} \pm 0.010$ \\
WIPO                &     1352 &      358 &    74435 &       188 & $0.490  $ & $\hspace*{-1.5mm} \pm 0.010 $ & \dotuline{$ 0.430  $} & $\hspace*{-1.5mm} \pm 0.010 $ & \dotuline{$ 0.460  $} & $\hspace*{-1.5mm} \pm 0.010 $ & $0.480  $ & $\hspace*{-1.5mm} \pm 0.010$ \\
EUR-Lex (subj.)     &    19348 &    10-cv &     5000 &       201 & $0.840  $ & $\hspace*{-1.5mm} \pm 0.005$ & \dotuline{$ 0.814 $} & $\hspace*{-1.5mm} \pm 0.004$ & \dotuline{$ 0.828 $} & $\hspace*{-1.5mm} \pm 0.005$ & $0.840  $ & $\hspace*{-1.5mm} \pm 0.004$ \\
bookmarks           &    65892 &    21964 &     2150 &       208 & $0.453$ & $\hspace*{-1.5mm} \pm 0.001$& \dotuline{$ 0.436 $} & $\hspace*{-1.5mm} \pm 0.002$ & \dotuline{$ 0.445 $} & $\hspace*{-1.5mm} \pm 0.002$ & $0.453 $ & $\hspace*{-1.5mm} \pm 0.002$ \\
diatoms             &     2065 &     1054 &      371 &       359 & $0.700   $ & $\hspace*{-1.5mm} \pm 0.010 $ & \dotuline{$ 0.650  $} & $\hspace*{-1.5mm} \pm 0.010 $ & \dotuline{$ 0.670  $} & $\hspace*{-1.5mm} \pm 0.010$  & $0.710  $ & $\hspace*{-1.5mm} \pm 0.020$ \\
corel5k             &     4500 &      500 &      499 &       374 & $0.303 $ & $\hspace*{-1.5mm} \pm 0.012$ & {$0.309$} & $\hspace*{-1.5mm} \pm 0.011$ & $0.307 $ & $\hspace*{-1.5mm} \pm 0.011$ & $0.299 $ & $\hspace*{-1.5mm} \pm 0.013$ \\
EUR-Lex (dir.)      &    19348 &    10-cv &     5000 &       412 & $0.814 $ & $\hspace*{-1.5mm} \pm 0.006$ & \dotuline{$ 0.782 $} & $\hspace*{-1.5mm} \pm 0.008$ & \dotuline{$ 0.796 $} & $\hspace*{-1.5mm} \pm 0.009$ & $0.813 $ & $\hspace*{-1.5mm} \pm 0.007$ \\
SCOP-GO             &     6507 &     3336 &     2003 &       465 & $0.811 $ & $\hspace*{-1.5mm} \pm 0.004$ & {$0.808$} & $\hspace*{-1.5mm} \pm 0.005$ & $0.811 $ & $\hspace*{-1.5mm} \pm 0.004$ & \dotuline{$ 0.806 $} & $\hspace*{-1.5mm} \pm 0.004$ \\
delicious           &    12920 &     3185 &      500 &       983 & $0.384 $ & $\hspace*{-1.5mm} \pm 0.004$ & {$0.381$} & $\hspace*{-1.5mm} \pm 0.003$ & $0.382 $ & $\hspace*{-1.5mm} \pm 0.002$ & $0.383 $ & $\hspace*{-1.5mm} \pm 0.004$ \\
drug-interaction    &     1396 &      466 &      660 &      1554 & $0.379 $ & $\hspace*{-1.5mm} \pm 0.014$ & {$0.384$} & $\hspace*{-1.5mm} \pm 0.009$ & $0.378 $ & $\hspace*{-1.5mm} \pm 0.013$ & $0.367 $ & $\hspace*{-1.5mm} \pm 0.016$ \\
protein-interaction &     1165 &      389 &      876 &      1862 & $0.330  $ & $\hspace*{-1.5mm} \pm 0.015$ & {$0.337$} & $\hspace*{-1.5mm} \pm 0.016$ & $0.337 $ & $\hspace*{-1.5mm} \pm 0.017$ & $0.335 $ & $\hspace*{-1.5mm} \pm 0.014$ \\
Expression-GO       &     2485 &      551 &     1288 &      2717 & $0.235 $ & $\hspace*{-1.5mm} \pm 0.005$ & \dotuline{$ 0.211 $} & $\hspace*{-1.5mm} \pm 0.005$ & \dotuline{$ 0.219 $} & $\hspace*{-1.5mm} \pm 0.005$ & $0.232 $ & $\hspace*{-1.5mm} \pm 0.005$ \\
EUR-Lex (desc.)     &    19348 &    10-cv &     5000 &      3993 & $0.523 $ & $\hspace*{-1.5mm} \pm 0.008$ & \dotuline{$ 0.485 $} & $\hspace*{-1.5mm} \pm 0.008$ & \dotuline{$ 0.497 $} & $\hspace*{-1.5mm} \pm 0.009$ & $0.523    $ & $\hspace*{-1.5mm} \pm 0.007   $ \\
\bottomrule
\end{tabular}
}
\label{tab:random-forest}
\end{table}

\subsection{Effect of the size $m$ of the Gaussian output space}
\label{sec:empirical-convergence-delicious}

To illustrate the behaviour of our algorithms, we first focus on the
``Delicious''  dataset \cite{tsoumakas2008effective}, which has a large number of labels ($d=983$), of input
features ($p=500$), and of training ($n_{LS} = 12920$) and testing
($n_{TS}=3185$) samples.

The left part of figure~\ref{fig:delicious_vs_n_proj} shows, when Gaussian
output-space projections are combined with the standard CART algorithm
building a single tree, how the precision converges
(cf Theorem~\ref{thm:var-jl-lemma}) when $m$ increases towards $d$.
We observe that in this case, convergence is reached around $m=200$
at the expense of a slight decrease of accuracy, so that a
compression factor of about 5 is possible with respect to the original output
dimension $d=983$.

The right part of figure~\ref{fig:delicious_vs_n_proj} shows, on the same
dataset, how the method behaves when combined with Random Forests. Let us first
notice that the Random Forests  grown on the original output space (green line)
are significantly more accurate than the single trees, their accuracy being
almost twice as high.
We also observe that Algorithm~\ref{alg:output-subspace-tree-ensemble}
(orange curve) converges much more rapidly than
Algorithm~\ref{alg:output-fix-subspace-tree-ensemble} (blue curve) and
slightly outperforms the Random Forest grown on the original output space.
It needs only about $m=25$ components to converge, while
Algorithm~\ref{alg:output-fix-subspace-tree-ensemble} needs about $m=75$ of
them. These results are in accordance with the analysis of
Section~\ref{sec:biasvar}, showing that
Algorithm~\ref{alg:output-subspace-tree-ensemble} can't be inferior to
Algorithm~\ref{alg:output-fix-subspace-tree-ensemble}. In the rest of this
paper we will therefore focus on Algorithm~\ref{alg:output-subspace-tree-ensemble}.

\subsection{Systematic analysis over 24 datasets}
\label{sec:empirical-convergence-jl}

To assess our methods, we have collected 24 different multi-label
classification datasets from the literature (see Section~D of the supplementary
material, for more information and bibliographic references to these datasets)
covering a broad spectrum of application domains and ranges of the output
dimension ($d \in [6 ; 3993]$, see Table \ref{tab:random-forest}). For 21 of
the datasets, we made experiments where the dataset is split randomly into a
learning set of size $n_{LS}$, and a test set of size $n_{TS}$, and are
repeated 10 times (to get average precisions and standard deviations), and for
3 of them we used a ten-fold cross-validation scheme (see Table
\ref{tab:random-forest}).

Table~\ref{tab:random-forest} shows our results on the 24 multi-label datasets,
by comparing Random Forests learnt on the original output space with those
learnt by Algorithm~\ref{alg:output-subspace-tree-ensemble} combined with
Gaussian subspaces of size $m\in \left\{1, d, \ln{d}\right\}$\footnote{$\ln d$
is rounded to the nearest integer value; in Table~\ref{tab:random-forest} the
values of $\ln{d}$ vary between 2 for $d=6$ and 8 for $d=3993$.}. In these
experiments, the three parameters of Random Forests are set respectively to
$k=\sqrt{p}$, $n_{\min}=1$ (default values, see
\cite{DBLP:journals/ml/GeurtsEW06}) and $t=100$ (reasonable computing budget).
Each model is learnt ten times on a different shuffled train/testing split,
except for the 3 EUR-lex datasets where we kept the original 10 folds of
cross-validation.

We observe that for all datasets (except maybe SCOP-GO), taking $m=d$ leads to
a similar average precision to the standard Random Forests, i.e. no difference
superior to one standard deviation of the error.  On 11 datasets, we see that
$m=1$ already yields a similar average precision (values not underlined in
column $m=1$). For the 13 remaining datasets, increasing $m$ to $\ln{d}$
significantly decreases the gap with the Random Forest baseline and 3 more
datasets reach this baseline. We also observe that on several datasets such as
``Drug-interaction'' and ``SCOP-GO'', better performance on the Gaussian
subspace is attained with high output randomization
($m=\left\{1,\ln{d}\right\}$) than with $m=d$. We thus conclude that the
optimal level of output randomization (i.e. the optimal value of the ratio
$m/d$) which maximizes accuracy performances, is dataset dependent.

While our method is intended for tasks with very high dimensional output
spaces, we however notice that even with relatively small numbers of labels,
its accuracy remains comparable to the baseline, with suitable $m$.

To complete the analysis, Appendix~F considers the same experiments with a
different base-learner (Extra Trees of \cite{DBLP:journals/ml/GeurtsEW06}),
showing very similar trends.

\subsection{Input vs output space randomization}
\label{sec:input-output-bias-variance-tradeoff}

We study in this section the interaction of the additional randomization of
the output space with that concerning the input space already built in the Random Forest method.

To this end, we consider the ``Drug-interaction'' dataset ($p=660$ input
features and $d=1554$ output
labels~\cite{yamanishi2011extracting}), and we study the
effect of parameter $k$ controlling the input space randomization of the Random
Forest method with the randomization of the output space by Gaussian
projections controlled by the parameter $m$. To this end, Figure
\ref{fig:max-features-drug-interaction} shows the evolution of the accuracy for
growing values of $k$ (i.e. decreasing strength of the input space
randomization), for three different quite low values of $m$ (in this case $m
\in \left\{1, \ln{d}, 2 \ln{d}\right\}$).  We observe that Random Forests
learned on a very low-dimensional Gaussian subspace (red, blue and pink curves)
yield essentially better performances than Random Forests on the original
output space, and also that their behaviour with respect to the parameter $k$
is quite different. On this dataset, the output-space randomisation makes the
method completely immune to the `over-fitting' phenomenon observed for high
values of $k$ with the baseline method (green curve).

\begin{figure}[htb]
\centering
\includegraphics[width=0.45\textwidth]{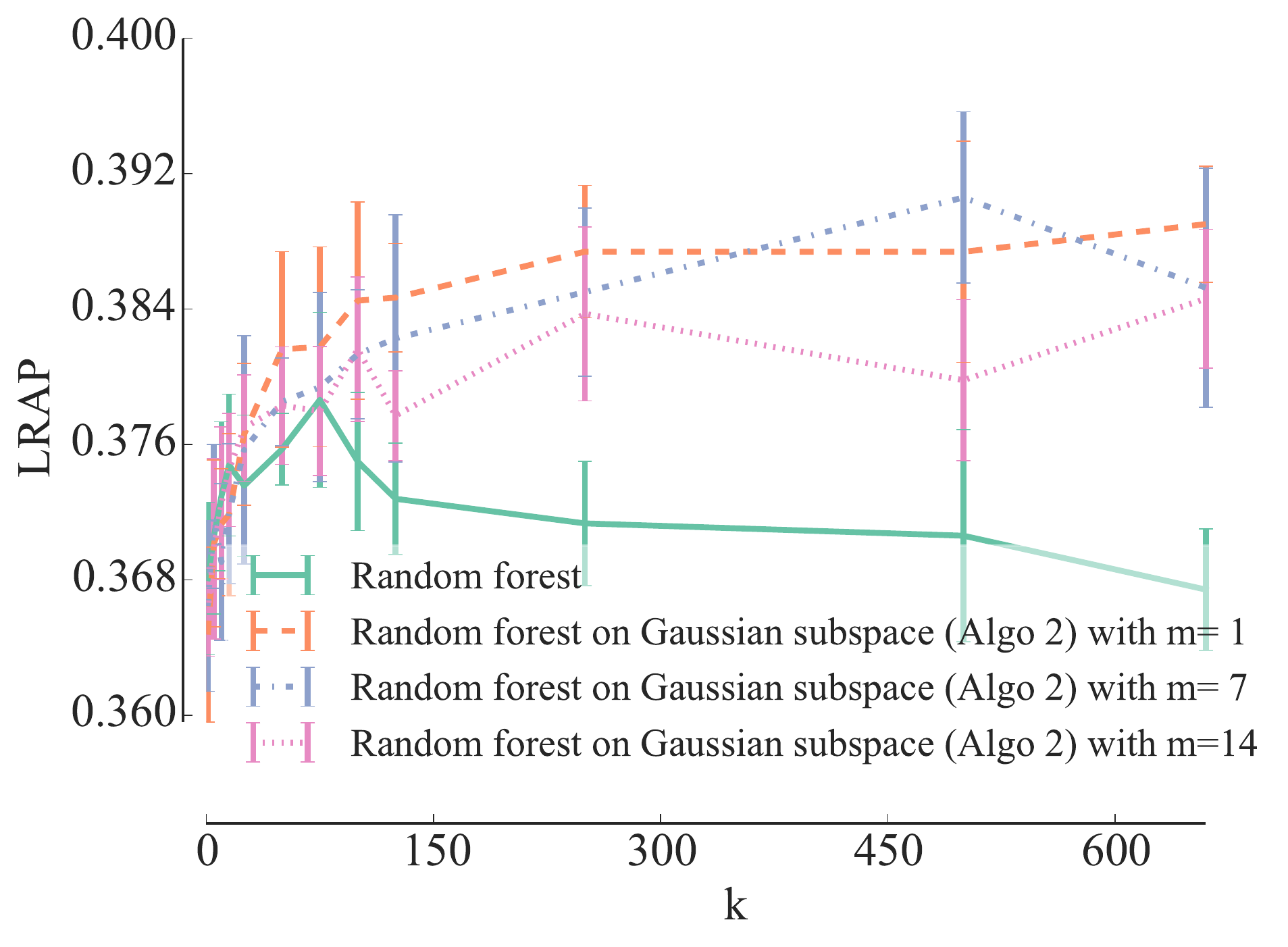}\vspace*{-3mm}
\caption{Output randomization with Gaussian projections yield better average
         precision than the original output space on the ``Drug-Interaction''
         dataset ($n_{\min}=1$ , $t=100$).}
\label{fig:max-features-drug-interaction}
\end{figure}

We refer the reader to a similar study on the ``Delicious'' dataset given in
the Appendix~E (supplementary material), which shows that the interaction
between $m$ and $k$ may be different from one dataset to another.
It is thus advisable to jointly optimize the value of $m$ and $k$, so as to
maximise the tradeoff between accuracy and computing times in a problem and
algorithm specific way.

\subsection{Alternative output dimension reduction techniques}
\label{sec:alternative-output-transformation}

In this section, we study Algorithm~\ref{alg:output-subspace-tree-ensemble}
when it is combined with alternative output-space dimensionality reduction
techniques. We focus again on the ``Delicious'' dataset,
but similar trends could be observed on other datasets.

Figure~\ref{fig:m_delicious_gaussian_rademacher_hadamard} first compares
Gaussian random projections with two other dense projections: Rademacher
matrices with $s=1$ (cf. Section~2.2) and compression matrices obtained by
sub-sampling (without replacement) Hadamard
matrices~\cite{candes2011probabilistic}.  We observe that Rademacher and
subsample-Hadamard sub-spaces behave very similarly to Gaussian random
projections.

In a second step, we compare Gaussian random projections with two
(very) sparse projections: first, sparse Rademacher sub-spaces
obtained by setting the sparsity parameter $s$ to $3$ and $\sqrt{d}$,
selecting respectively about 33\% and 2\% of the original outputs to
compute each component, and second, sub-sampled identity subspaces,
similar to \cite{tsoumakas2007random}, where each of the $m$ selected
components corresponds to a randomly chosen original label and also
preserve sparsity. Sparse projections are very interesting from a
computational point of view as they require much less operations to
compute the projections but the number of components required for
condition (\ref{eqn:js}) to be satisfied is typically higher than for
dense projections \cite{DBLP:conf/kdd/LiHC06,candes2011probabilistic}. Figure~\ref{fig:delicious_vs_n_proj_unitary}
compares these three projection methods with standard Random Forests
on the ``delicious'' dataset. All three projection methods converge to
plain Random Forests as the number of components $m$ increases but
their behaviour at low $m$ values are very different. Rademacher
projections converge faster with $s=3$ than with $s=1$ and
interestingly, the sparsest variant ($s=\sqrt{d}$) has its optimum at
$m=1$ and improves in this case over the Random Forests
baseline. Random output subspaces converge slower but they lead to a
notable improvement of the score over baseline Random Forests. This
suggests that although their theoretical guarantees are less good,
sparse projections actually provide on this problem a better
bias/variance tradeoff than dense ones when used in the context of
Algorithm 2.

Another popular dimension reduction technique is the principal component
analysis (PCA). In Figure~\ref{fig:delicious_vs_n_proj_pca}, we repeat the same
experiment to compare PCA with Gaussian random projections. Concerning PCA, the
curve is generated in decreasing order of eigenvalues, according to their
contribution to the explanation of the output-space variance. We observe that
this way of doing is far less effective than the random projection techniques
studied previously.

\begin{figure}[ht]
\centering
\subfigure[Computing the impurity criterion on a dense Rademacher or on a
           subsample-Hadamard output sub-space is another efficient way to learn
           tree ensembles.]{
              \includegraphics[width=0.45\textwidth]{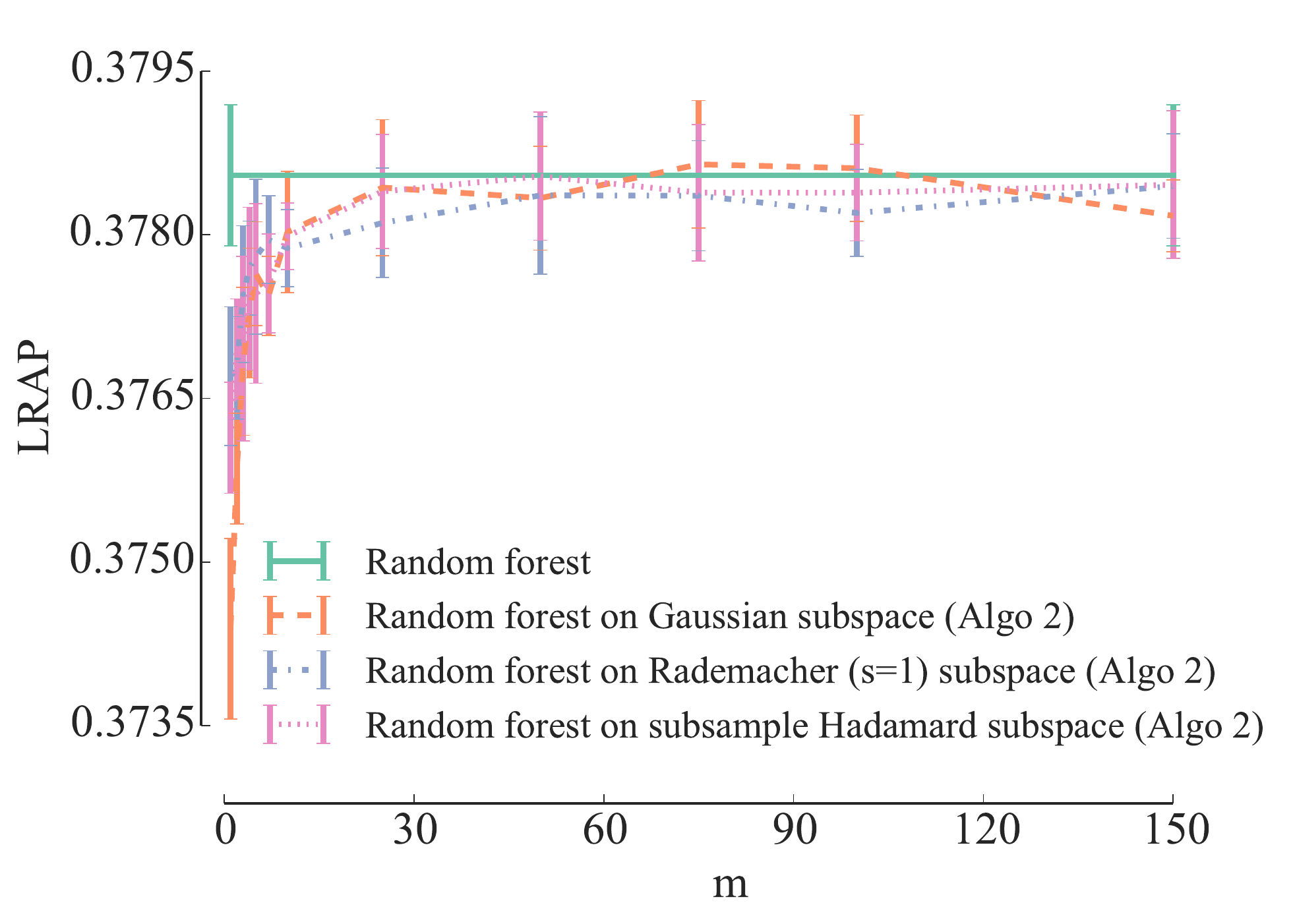}
              \label{fig:m_delicious_gaussian_rademacher_hadamard}
          } \qquad
\subfigure[Sparse random projections output sub-space yield better average precision
           than on the original output space.]{
              \includegraphics[width=0.45\textwidth]{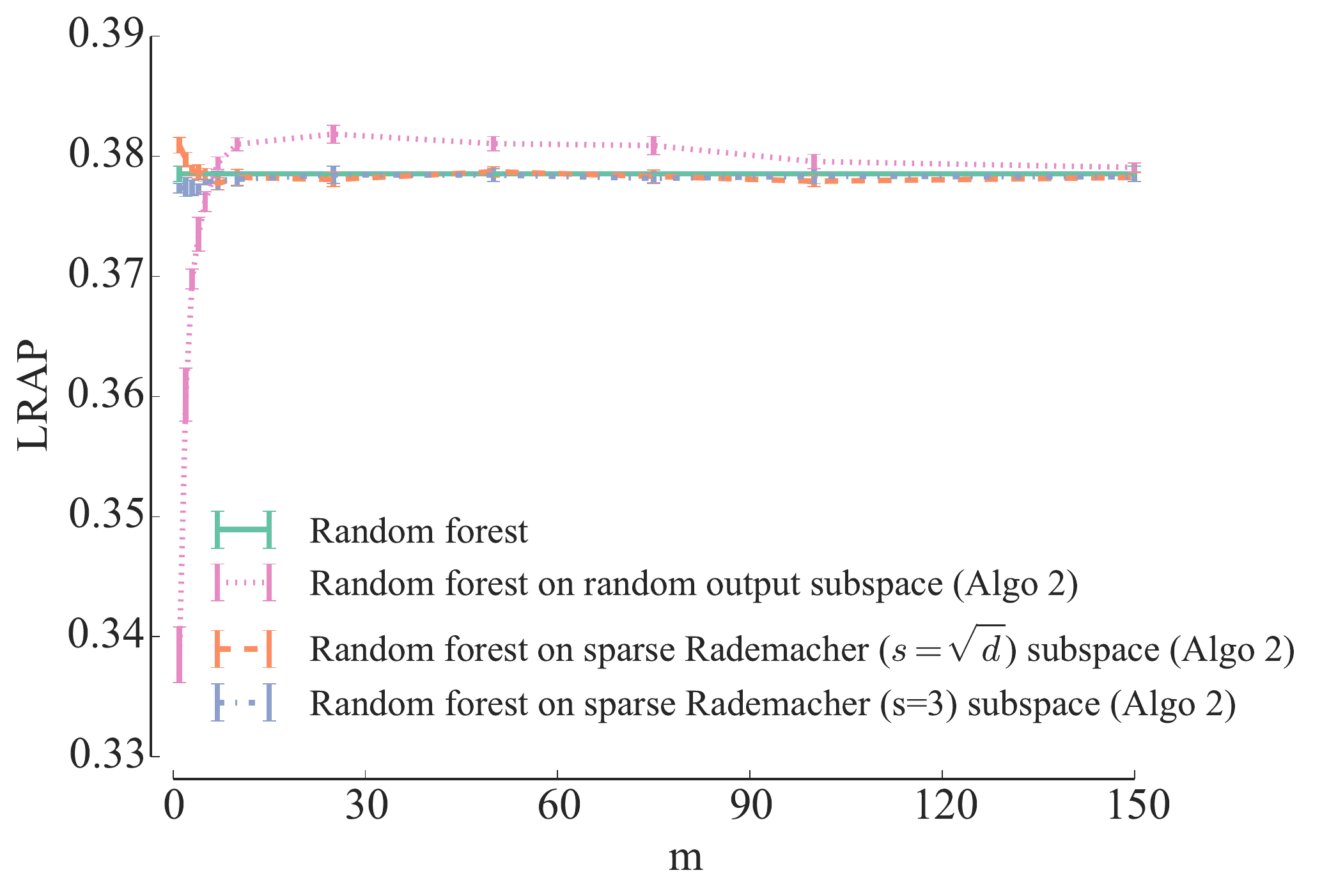}
              \label{fig:delicious_vs_n_proj_unitary}
          } \\
\subfigure[PCA compared with Gaussian subspaces.]{
              \includegraphics[width=0.45\textwidth]{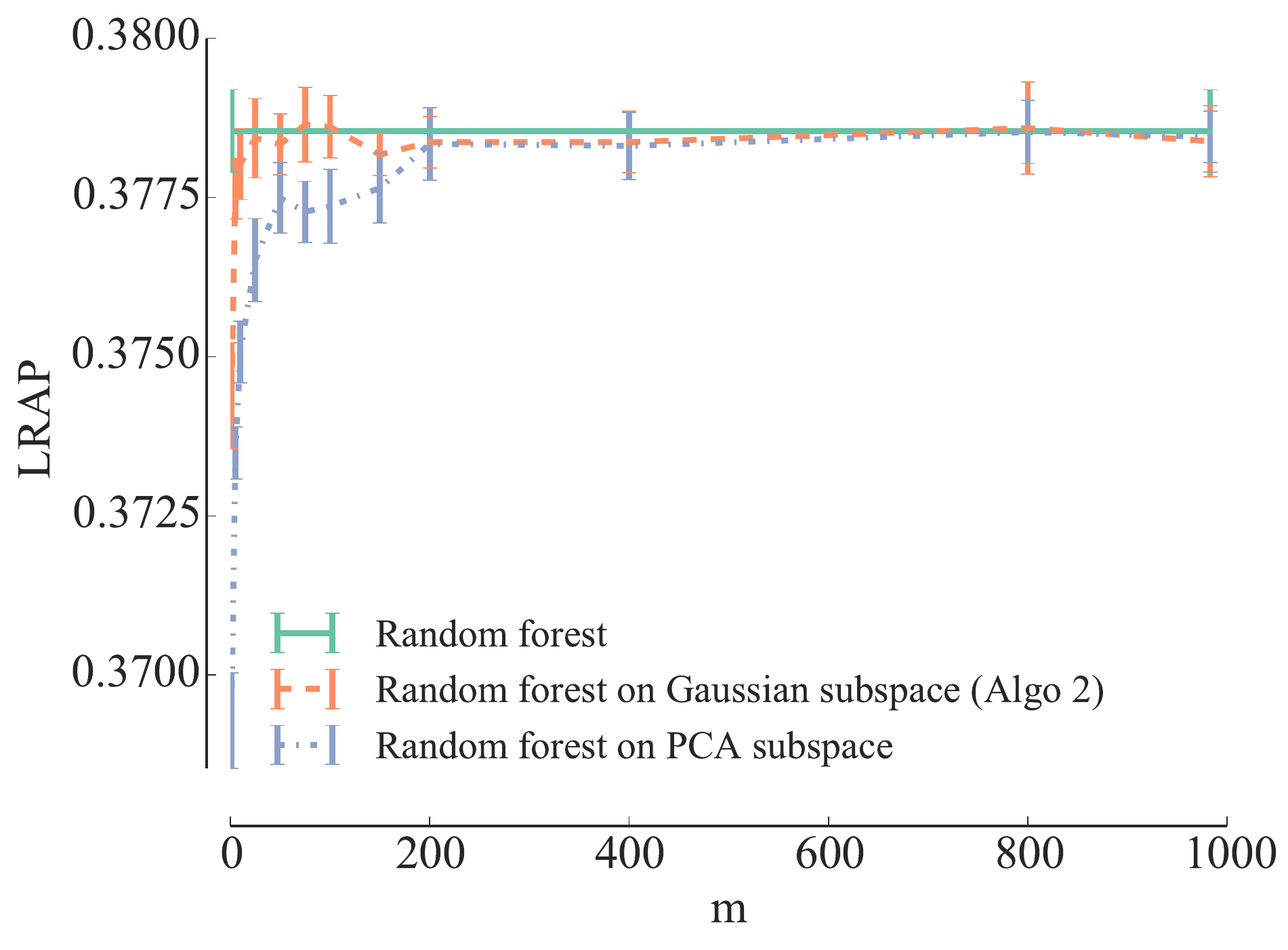}
              \label{fig:delicious_vs_n_proj_pca}
          }

\caption{``Delicious'' dataset, $t=100$, $k=\sqrt{p}$, $n_{\min}=1$.}
\end{figure}

\subsection{Learning stage computing times}\label{sec:benchmarking}

Our implementation of the learning algorithms  is based on the
\textit{scikit-learn} Python package version 0.14-dev
\cite{pedregosa2011scikit}. To fix ideas about computing times, we report
these obtained on a Mac Pro 4.1 with a dual Quad-Core Intel Xeon processor at
2.26 GHz, on the ``Delicious'' dataset.
Matrix operation, such as random projections, are performed with the
BLAS and the LAPACK from the Mac OS X \textit{Accelerate} framework. Reported
times are obtained by summing the user and sys time of the \textit{time} UNIX
utility.

The reported timings correspond to the following operation: (i) load
the dataset in memory, (ii) execute the algorithm. All methods use the
same code to build trees. In these conditions, learning a random
forest on the original output space ($t=100$, $n_{\min}=1$,
$k=\sqrt{d}$) takes 3348~ s; learning the same model on a Gaussian
output space of size $m=25$ requires 311~s, while $m=1$ and $m=250$
take respectively 236~s and 1088~s. Generating a
Gaussian sub-space of size $m=25$ and projecting the output data of
the training samples is done in less than 0.25~s, while $m=1$ and
$m=250$ takes around 0.07~s and 1~s respectively. The time needed to
compute the projections is thus negligible with respect to the time
needed for the tree construction.

We see that a speed-up of an order of magnitude could be obtained,
while at the same time preserving accuracy with respect to the
baseline Random Forests method. Equivalently, for a
fixed computing time budget, randomly projecting the output space
allows to build more trees and thus to improve predictive
performances with respect to standard Random Forests.

\section{Conclusions} \label{sec:conclusions}

This paper explores the use of random output space projections combined with
tree-based ensemble methods to address large-scale multi-label classification
problems. We study two algorithmic variants that either build a tree-based
ensemble model on a single shared random subspace or build each tree in the
ensemble on a newly drawn random subspace. The second approach is shown
theoretically and empirically to always outperform the first in terms of
accuracy. Experiments on 24 datasets show that on most problems, using gaussian
projections allows to reduce very drastically the size of the output space, and
therefore computing times, without affecting accuracy. Remarkably, we also show
that by adjusting jointly the level of input and output randomizations and
choosing appropriately the projection method, one could also improve predictive
performance over the standard Random Forests, while still improving very
significantly computing times. As future work, it would be very interesting to
propose efficient techniques to automatically adjust these parameters, so as to
reach the best tradeoff between accuracy and computing times on a given
problem.

To best of our knowledge, our work is the first to study random output
projections in the context of multi-output tree-based ensemble methods. The
possibility with these methods to relabel tree leaves with predictions in the
original output space makes this combination very attractive. Indeed, unlike
similar works with linear models \cite{DBLP:conf/nips/HsuKLZ09,NIPS2013_5083},
our approach only relies on Johnson-Lindenstrauss lemma, and not on any output
sparsity assumption, and also does not require to use any output reconstruction
method. Besides multi-label classification, we would like to test our method on
other, not necessarily sparse, multi-output prediction problems.

\paragraph{Acknowledgements.} Arnaud Joly is research fellow of the FNRS,
Belgium. This work is supported by PASCAL2 and the IUAP DYSCO, initiated by
the Belgian State, Science Policy Office.


\bibliographystyle{splncs03}
\bibliography{references}

\clearpage

\renewcommand{\theequation}{a.\arabic{equation}}
\setcounter{equation}{0}

\appendix

\section{Proof of equation (3)}
\label{sec:proof-var-distance}

The sum of the variances of $n$ observations drawn from a random vector
$y \in \mathbb{R}^d$ can be interpreted as a sum of squared euclidean distances
between the pairs of observations
\begin{equation}
\Var((y^i)_{i=1}^n) = \frac{1}{2n^2} \sum_{i=1}^n \sum_{j=1}^n ||y^i - y^j||^2. \label{sum-of-dist}
\end{equation}

\begin{proof}\small
\begin{align}
& \Var((y^i)_{i=1}^n) \nonumber \\
&\stackrel{\mbox{\tiny def}}{=} \frac{1}{n} \sum_{i=1}^n ||y^i - \frac{1}{n} \sum_{j=1}^n y^j||^2 \\
&= \frac{1}{n} \sum_{i=1}^n
      (y^i - \frac{1}{n} \sum_{j=1}^n y^j)^T
      (y^i - \frac{1}{n} \sum_{k=1}^n y^k) \\
&= \frac{1}{n} \sum_{i=1}^n
      \left(
      {y^i}^T y^i -
      \frac{2}{n}  \sum_{j=1}^n {y^i}^T y^j +
      \frac{1}{n^2} \sum_{j=1}^n \sum_{k=1}^n {y^j}^T y^k
      \right)\\
&=  \frac{1}{n} \sum_{i=1}^n {y^i}^T y^i
    - \frac{2}{n^2} \sum_{i=1}^n \sum_{j=1}^n {y^i}^T y^j
    + \frac{1}{n^2} \sum_{j=1}^n \sum_{k=1}^n {y^j}^T y^k \\
&= \frac{1}{n} \sum_{i=1}^n {y^i}^T y^i
   - \frac{1}{n^2} \sum_{i=1}^n \sum_{j=1}^n {y^i}^T y^j \\
&=   \frac{1}{2n} \sum_{i=1}^n {y^i}^T y^i
   + \frac{1}{2n} \sum_{j=1}^n {y^j}^T y^j
   - \frac{1}{n^2} \sum_{i=1}^n \sum_{j=1}^n {y^i}^T y^j \\
&=  \frac{1}{2 n^2} \sum_{i=1}^n \sum_{j=1}^n {y^i}^T\hspace*{-1mm} y^i
   \hspace*{-1mm}+ \hspace*{-1mm}\frac{1}{2 n^2} \sum_{i=1}^n \sum_{j=1}^n {y^j}^T\hspace*{-1mm} y^j
   \hspace*{-1mm}- \hspace*{-1mm}\frac{1}{n^2} \sum_{i=1}^n \sum_{j=1}^n {y^i}^T\hspace*{-1mm} y^j \\
&= \frac{1}{2n^2} \sum_{i=1}^n \sum_{j=1}^n
      \left(
      {y^i}^T y^i + {y^j}^T y^j - 2 {y^i}^T y^j
      \right) \\
&= \frac{1}{2n^2} \sum_{i=1}^n \sum_{j=1}^n ||y^i - y^j||^2.
\end{align}
\end{proof}

\section{Proof of Theorem 1}
\label{sec:proof-thm-var-jl-lemma}

{\bf Theorem 1} {\em
Given $\epsilon > 0$, a sample $(y^i)_{i=1}^{n}$ of $n$ points $y \in
\mathbb{R}^d$ and a projection matrix $\Phi\in\mathbb{R}^{m \times d}$ such
that for all $i, j  \in \{1, \ldots , n\}$ the Johnson-Lindenstrauss lemma
holds, we have also:
\begin{align}
(1 - \epsilon) \Var((y^i)_{i=1}^n) & \leq \Var((\Phi y^i))_{i=1}^n) \nonumber \\
                                   & \leq (1 + \epsilon) \Var((y^i)_{i=1}^n). \label{var-ineq}
\end{align}}

\begin{proof}\small
This result directly follows from the Johnson-Lindenstrauss Lemma and from eqn. (\ref{sum-of-dist}).

From the Johnson-Lindenstrauss Lemma we have for any $i,j$
\begin{equation}
(1 - \epsilon) ||y^i - y^j||^2 \leq || \Phi y^i - \Phi y^j ||^2
                               \leq (1 + \epsilon) || y^i - y^j||^2. \label{ineq-vect}
\end{equation}
By summing the three terms of (\ref{ineq-vect}) over all pairs $i,j$
and dividing by $1 / (2n^2)$ and by then using eqn. (\ref{sum-of-dist}), we get eqn. (\ref{var-ineq}).\end{proof}

\section{Bias/variance analysis}\label{app:biasvar}

In this subsection, we adapt the bias/variance analysis of randomised supervised learning algorithms carried out in
\cite{DBLP:journals/ml/GeurtsEW06},  to assess the effect of random output
projections in the context of the two algorithms studied in our paper.

\subsection{Single random trees.}
Let us denote by $f(.;ls,\phi,\epsilon):{\cal X}\rightarrow \mathbb{R}^d$ a single
multi-output (random) tree obtained from a projection matrix $\phi$ (below we use $\Phi$ to denote the corresponding random variable), where
$\epsilon$ is the value of a random variable $\varepsilon$ capturing the random
perturbation scheme used to build this tree (e.g., bootstrapping and/or random
input space selection). Denoting by $Err(f(x;ls,\phi,\epsilon))$ the square
error of this model at some point $x\in{\cal X}$ defined by:
$$E_{Y|x}\{||Y-f(x;ls,\phi,\epsilon\})||^2\}.$$

{\small

The average of this square error can decomposed as follows:
\begin{eqnarray*}\small
&&E_{LS,\Phi,\varepsilon}\{Err(f(x;LS,\Phi,\varepsilon))\}\\
&=&\sigma^2_R(x)+||f_B(x)-\bar{f}(x)||^2+\Var_{LS,\Phi,\varepsilon}\{f(x;LS,\Phi,\varepsilon)\},
\end{eqnarray*}
where $$\bar{f}(x)\stackrel{\text{def}}{=}E_{LS,\Phi,\varepsilon}\{f(x;LS,\Phi,\varepsilon)\},f_B(x)=E_{Y|x}\{Y\},$$
and
$$\small \Var_{LS,\Phi,\varepsilon}\{f(x;LS,\Phi,\varepsilon)\} \stackrel{\text{def}}{=} E_{LS,\Phi,\varepsilon}\{||f(x;LS,\Phi,\varepsilon)-\bar{f}(x)||^2\}$$

The three terms of this decomposition are respectively the residual error,
the bias, and the variance of this estimator (at $x$).

The variance term can be further decomposed as follows using the law of total
variance:
\begin{eqnarray}\small
\Var_{LS,\Phi,\varepsilon}\{f(x;LS,\Phi,\varepsilon)\}\hspace*{-4cm} &\hspace{1cm}&\notag\\
&=&\Var_{LS}\{E_{\Phi,\varepsilon|LS}\{f(x;LS,\Phi,\varepsilon)\}\}\notag\\
&&+E_{LS}\{\Var_{\Phi,\varepsilon|LS}\{f(x;LS,\Phi,\varepsilon)\}\}.\label{eqn:totalvar1}
\end{eqnarray}
The first term is the variance due to the learning sample randomization and the
second term is the average variance (over $LS$) due to both the random forest
randomization and the random output projection. By using the law of total
variance a second time, the second term of (\ref{eqn:totalvar1}) can be further
decomposed as follows:
\begin{align}\small
&E_{LS}\{\Var_{\Phi,\varepsilon|LS}\{f(x;LS,\Phi,\varepsilon)\}\} \notag\\
&= E_{LS}\{\Var_{\Phi|LS}\{E_{\varepsilon|LS,\Phi}\{f(x;LS,\Phi,\varepsilon)\}\}\}\notag\\
&\quad + E_{LS}\{E_{\Phi|LS}\{\Var_{\varepsilon|LS,\Phi}\{f(x;LS,\Phi,\varepsilon)\}\}\}.
\label{eqn:totalvar2}
\end{align}
The first term of this decomposition is the variance due to the random
choice of a projection and the second term is the average variance due
to the random forest randomization.  Note that all these terms are non
negative. In what follows, we will denote these three terms respectively $V_{LS}(x)$, $V_{Algo}(x)$, and $V_{proj}(x)$. We thus have:
$$\Var_{LS,\Phi,\varepsilon}\{f(x;LS,\Phi,\varepsilon)\}=V_{LS}(x)+V_{Algo}(x)+V_{Proj}(x),$$
with
\begin{eqnarray*}
V_{LS}(x)&=&\Var_{LS}\{E_{\Phi,\varepsilon|LS}\{f(x;LS,\Phi,\varepsilon)\}\}\\
V_{Algo}(x)&=&E_{LS}\{E_{\Phi|LS}\{\Var_{\varepsilon|LS,\Phi}\{f(x;LS,\Phi,\varepsilon)\}\}\},\\
V_{Proj}(x)&=&E_{LS}\{\Var_{\Phi|LS}\{E_{\varepsilon|LS,\Phi}\{f(x;LS,\Phi,\varepsilon)\}\}\},
\end{eqnarray*}

}

\subsection{Ensembles of $t$ random trees.}

When the random projection is fixed for all $t$ trees in the ensemble (Algorithm~1), the
algorithm computes an approximation, denoted $f_1(x;ls,\phi, \epsilon^t)$, that
takes the following form:
$$f_1(x;ls,\phi, \epsilon^t)=\frac{1}{t}\sum_{i=1}^t f(x;ls,\phi, \epsilon_i),$$
where $\epsilon^t=(\epsilon_1,\ldots,\epsilon_{t})$ a vector of
i.i.d. values of the random variable $\varepsilon$. When a different random projection
is chosen for each tree (Algorithm~2),
the algorithm computes an approximation, denoted by
$f_2(x;ls,\phi^t,\epsilon^t)$, of the following form:
$$f_2(x;ls,\phi^t,\epsilon^t)=\frac{1}{t}\sum_{i=1}^t f(x;ls,\phi_i,\epsilon_i),$$ where
$\phi^t=(\phi_1,\ldots,\phi_t)$ is also a vector of i.i.d. random projection matrices.

We would like to compare the average errors of these two algorithms with the
average errors of the original single tree method, where the average is taken
for all algorithms over their random parameters (that include the learning
sample).

{\small

Given that all trees are grown independently of each other, one can show that
the average models corresponding to each algorithm are equal:
\begin{eqnarray*}
\bar{f}(x)&=&E_{LS,\Phi,\varepsilon^t}\{f_1(x;LS,\Phi,\varepsilon^t)\}\\
&=&E_{LS,\Phi^t,\varepsilon^t}\{f_2(x;LS,\Phi^t,\varepsilon^t)\}.
\end{eqnarray*}
They thus all have the exact same bias (and residual error) and differ only in
their variance.

Using the same argument, the first term of the variance decomposition
in (\ref{eqn:totalvar1}), ie. $V_{LS}(x)$, is the same for all three
algorithms since:
\begin{eqnarray*}
&&E_{\Phi,\varepsilon|LS}\{f(x;LS,\Phi,\varepsilon)\}\\
&=&E_{\Phi,\varepsilon^t|LS}\{f_1(x;LS,\Phi,\varepsilon^t)\}\\
&=&E_{\Phi^t,\varepsilon^t|LS}\{f_2(x;LS,\Phi^t,\varepsilon^t)\}.
\end{eqnarray*}
Their variance thus only differ in the second term in (\ref{eqn:totalvar1}).

Again, because of the conditional independence of the ensemble terms given the $ls$ and projection matrix $\phi$, Algorithm 1,
which keeps the output projection fixed for all trees, is such that
$$E_{\epsilon^t|LS,\Phi}\{f_1(x;LS,\Phi,\varepsilon^t)\}=E_{\epsilon|LS,\Phi}\{f(x;LS,\Phi,\varepsilon)\}$$
and
\[
\Var_{\varepsilon^t|LS,\Phi}\{f_1(x;LS,\Phi,\varepsilon^t)\} =
\frac{1}{t}\Var_{\varepsilon|LS,\Phi}\{f(x;LS,\Phi,\varepsilon)\}.
\]

It thus divides the second term of (\ref{eqn:totalvar2}) by the number $t$ of ensemble terms.
Algorithm 2, on the other hand, is such that:
\[
\Var_{\Phi^t,\varepsilon^t|LS}\{f_2(x;LS,\Phi,\varepsilon^t)\}
= \frac{1}{t}\Var_{\Phi,\varepsilon|LS}\{f(x;LS,\Phi,\varepsilon)\},
\]

and thus divides the second term of (\ref{eqn:totalvar1}) by $t$.

Putting all these results together one gets that:
\begin{eqnarray*}
&&\hspace*{-5mm}E_{LS,\Phi,\varepsilon}\{Err(f(x;LS,\Phi,\varepsilon^t))\}\\
&&\hspace{-0.2cm}=\sigma^2_R(x)+B^2(x)+V_{LS}(x)+V_{Algo}(x)+V_{Proj}(x),\\
&&\hspace*{-5mm}E_{LS,\Phi,\varepsilon^t}\{Err(f_1(x;LS,\Phi,\varepsilon^t))\}\\
&&\hspace{-0.2cm}=\sigma^2_R(x)+B^2(x)+V_{LS}(x)+\frac{V_{Algo}(x)}{t}+V_{Proj}(x),\\
&&\hspace*{-5mm}E_{LS,\Phi^t,\varepsilon^t}\{Err(f_2(x;LS,\Phi^t,\varepsilon^t))\}\\
&&\hspace{-0.2cm} =\sigma^2_R(x)+B^2(x)+V_{LS}(x)+\frac{V_{Algo}(x)+V_{Proj}(x)}{t}.
\end{eqnarray*}

Given that all terms are positive, this result clearly shows that Algorithm~2 can not be worse than Algorithm~1.

}

\section{Description of the datasets}

Experiments are performed on several multi-label datasets:
the yeast~\cite{elisseeff2001kernel}
dataset in the biology domain;
the corel5k~\cite{DBLP:conf/eccv/DuyguluBFF02} and
the scene~\cite{boutell2004learning}
datasets in the image domain;
the emotions~\cite{tsoumakas2008multi} and
the CAL500~\cite{turnbull2008semantic}
datasets in the music domain;
the bibtex~\cite{katakis2008multilabel},
the bookmarks~\cite{katakis2008multilabel},
the delicious~\cite{tsoumakas2008effective},
the enron~\cite{klimt2004enron},
the EUR-Lex (subject matters, directory codes and eurovoc descriptors)~\cite{DBLP:conf/lrec/MenciaF10}
the genbase~\cite{diplaris2005protein},
the medical\footnote{The medical dataset comes from the computational medicine
center's 2007 medical natural language processing challenge
 \url{http://computationalmedicine.org/challenge/previous}.},
the tmc2007~\cite{srivastava2005discovering}
datasets in the text domain
and the mediamill~\cite{snoek2006challenge}
dataset in the video domain.

Several hierarchical classification tasks
are also studied to increase the diversity in the number of label
and treated as multi-label classification task. Each node of the hierarchy
is treated as one label. Nodes of the hierarchy which never occured in the
training or testing set were removed.
The reuters~\cite{rousu2005learning},
WIPO~\cite{rousu2005learning} datasets are from the text domain. The
Diatoms~\cite{Dimitrovski11:jrnl} dataset is from the image domain.
SCOP-GO~\cite{Clare03:phd}, Yeast-GO~\cite{Barutcuoglu06:jrnl}
and Expression-GO~\cite{vens2008decision} are from the biological domain.
Missing values in the Expression-GO dataset were inferred using
the median for continuous features and the most frequent value for categorical
features using the entire dataset.
The inference of a drug-protein interaction
network~\cite{yamanishi2011extracting} is also considered
either using the drugs to infer the interactions with the protein
(drug-interaction), either using the proteins to infer the interactions
with the drugs (protein-interaction).

Those datasets were selected to have a wide range of number of outputs $d$.
Their basic characteristics are summarized at
Table~\ref{tab:dataset_summary}. For more information on a particular dataset,
please see the relevant paper.

\begin{table}[htb]
\caption{Selected datasets have a number of labels $d$  ranging from 6 up to
         3993 in the biology, the text, the image, the video or the music domain.
         Each dataset has $n_{LS}$ training samples,  $n_{TS}$ testing
         samples and $p$ input features.}
\centering
\renewcommand{\tabcolsep}{1.4mm}
\begin{tabular}{@{} l rrrr  @{}}
\toprule
Datasets                      & $n_{LS}$ & $n_{TS}$ &      $p$ &        $d$\\
\midrule
emotions                      &      391 &      202 &       72 &         6 \\
scene                         &     1211 &     1196 &     2407 &         6 \\
yeast                         &     1500 &      917 &      103 &        14 \\
tmc2007                       &    21519 &     7077 &    49060 &        22 \\
genbase                       &      463 &      199 &     1186 &        27 \\
reuters                       &     2500 &     5000 &    19769 &        34 \\
medical                       &      333 &      645 &     1449 &        45 \\
enron                         &     1123 &      579 &     1001 &        53 \\
mediamill                     &    30993 &    12914 &      120 &       101 \\
Yeast-GO                      &     2310 &     1155 &     5930 &       132 \\
bibtex                        &     4880 &     2515 &     1836 &       159 \\
CAL500                        &      376 &      126 &       68 &       174 \\
WIPO                          &     1352 &      358 &    74435 &       188 \\
EUR-Lex \footnotesize{(subject matters)} &    19348 &    10-cv &     5000 &       201 \\
bookmarks                     &    65892 &    21964 &     2150 &       208 \\
diatoms                       &     2065 &     1054 &      371 &      359 \\
corel5k                       &     4500 &      500 &      499 &       374 \\
EUR-Lex \footnotesize{(directory codes)} &    19348 &    10-cv &     5000 &       412 \\
SCOP-GO                       &     6507 &     3336 &     2003 &       465 \\
delicious                     &    12920 &     3185 &      500 &       983 \\
drug-interaction              &     1396 &      466 &      660 &      1554 \\
protein-interaction           &     1165 &      389 &      876 &      1862 \\
Expression-GO                 &     2485 &      551 &     1288 &      2717 \\
EUR-Lex \scriptsize{(eurovoc descriptors)} &    19348 &    10-cv &     5000 &      3993 \\
\bottomrule
\end{tabular}
\label{tab:dataset_summary}
\end{table}

\section{Experiments with Extra trees}

In this section, we carry out experiments combining Gaussian random
projections (with $m\in\{1,\ln{d},d\}$) with the Extra Trees method
of \cite{DBLP:journals/ml/GeurtsEW06} (see Section 2.1 of the paper
for a very brief description of this method). Results on 23
datasets\footnote{Note that results on the ``EUR-lex'' dataset were
  not available at the time of submitting the paper. They will be
  added in the final version of this appendix.} are compiled in
Table~\ref{tab:extra_trees_results_summary}.

Like for Random Forests, we observe that for all 23 datasets taking
$m=d$ leads to a similar average precision to the standard Random
Forests, ie. no difference superior to one standard deviation of the
error. This is already the case with $m=1$ for 12 datasets and with
$m=\ln{d}$ for 4 more datasets. Interestingly, on 3 datasets with
$m=1$ and 3 datasets with $m=\ln{d}$, the increased randomization
brought by the projections actually improves average precision with
respect to standard Random Forests (bold values in
Table~\ref{tab:extra_trees_results_summary}).

\begin{table*}[htb]
\centering
\caption{Experiments with Gaussian projections and Extra Trees
  (($t=100$, $n_{\min}=1$, $k=\sqrt{p}$). See Section 4.3 and Table~1
  in the paper for the protocol. Mean scores in the last three columns
  are underlined if they show a negative difference with respect to the
  standard Random Forests of more than one standard deviation. Bold
  values highlight improvement over standard RF of more than one standard
  deviation.}
\begin{footnotesize}
\begin{tabular}{@{} lll ll ll ll  @{}}
\toprule
Datasets            & \multicolumn{2}{c}{Extra trees}      & \multicolumn{6}{c}{Extra trees on Gaussian sub-space } \\
\cmidrule(r){4-5} \cmidrule(r){6-7}\cmidrule(r){8-9}
                   &          &             & \multicolumn{2}{c}{$m=1$} & \multicolumn{2}{c}{$m\hspace*{-0.9mm}=\hspace*{-0.9mm}\lfloor\hspace*{-0.4mm} 0.5\hspace*{-0.8mm} +\hspace*{-0.8mm} \ln{d}\hspace*{-0.4mm}\rfloor$} & \multicolumn{2}{c}{$m=d$}\\
\midrule
emotions            & $0.81  $ & $\pm 0.01$  & $0.81  $ & $\pm 0.014$ & $0.80  $ & $\pm 0.013$ & $0.81 $ & $\pm 0.014$ \\
scene               & $0.873 $ & $\pm 0.004$ & $0.876 $ & $\pm 0.003$ & $0.877 $ & $\pm 0.007$ & $0.878 $ & $\pm 0.006$ \\
yeast               & $0.757 $ & $\pm 0.008$ & \dotuline{$0.746 $} & $\pm 0.004$ & $0.752 $ & $\pm 0.009$ & $0.757 $ & $\pm 0.01$ \\
tmc2017             & $0.782 $ & $\pm 0.003$ & \dotuline{$0.759 $} & $\pm 0.004$ & \dotuline{$0.77  $} & $\pm 0.002$ & $0.779 $ & $\pm 0.002$ \\
genbase             & $0.987 $ & $\pm 0.005$ & $0.991 $ & $\pm 0.004$ & $0.992 $ & $\pm 0.001$ & $0.992 $ & $\pm 0.005$ \\
reuters             & $0.88  $ & $\pm 0.003$ & $0.88  $ & $\pm 0.003$ & $0.878 $ & $\pm 0.004$ & $0.88  $ & $\pm 0.004$ \\
medical             & $0.855 $ & $\pm 0.008$ & {$\bf 0.867 $} & $\pm 0.009$ & {$\bf 0.872 $} & $\pm 0.006$ & $0.862 $ & $\pm 0.008$ \\
enron               & $0.66  $ & $\pm 0.01$  & $0.65  $ & $\pm 0.01 $ & $0.663 $ & $\pm 0.008$ & $0.66  $ & $\pm 0.01$ \\
mediamill           & $0.786 $ & $\pm 0.002$ & \dotuline{$0.778 $} & $\pm 0.002$ & \dotuline{$0.781 $} & $\pm 0.002$ & $0.784 $ & $\pm 0.001$ \\
Yeast-GO            & $0.49  $ & $\pm 0.009$ & \dotuline{$0.47  $} & $\pm 0.01 $ & $0.482 $ & $\pm 0.008$ & $0.48  $  & $\pm 0.01$ \\
bibtex              & $0.584 $ & $\pm 0.005$ & \dotuline{$0.538 $} & $\pm 0.005$ & \dotuline{$0.564 $} & $\pm 0.004$ & $0.583 $ & $\pm 0.004$ \\
CAL500              & $0.5   $ & $\pm 0.007$ & $0.502 $ & $\pm 0.008$ & $0.499 $ & $\pm 0.007$ & $0.503 $ & $\pm 0.009$ \\
WIPO                & $0.52  $ & $\pm 0.01 $ & \dotuline{$0.474 $} & $\pm 0.007$ & \dotuline{$0.49  $} & $\pm 0.01 $ & $0.515 $ & $\pm 0.006$ \\
EUR-Lex (subj.)     & $0.845 $ & $\pm 0.006$ & \dotuline{$0.834 $} & $\pm 0.004$ & \dotuline{$0.838 $} & $\pm 0.003$ & $0.845 $ & $\pm 0.005$ \\
bookmarks           & $0.453 $ & $\pm 0.002$ & \dotuline{$0.436 $} & $\pm 0.002$ & \dotuline{$0.444 $} & $\pm 0.003$ & $0.452 $ & $\pm 0.002$ \\
diatoms             & $0.73  $ & $\pm 0.01 $ & \dotuline{$0.69  $} & $\pm 0.01$  & \dotuline{$0.71  $} & $\pm 0.01 $ & $0.73  $ & $\pm 0.01 $ \\
corel5k             & $0.285 $ & $\pm 0.009$ & {$\bf 0.313 $} & $\pm 0.011$ & {$\bf 0.309 $} & $\pm 0.009$ & $0.285 $ & $\pm 0.011$ \\
EUR-Lex (dir.)      & $0.815 $ & $\pm 0.007$ & \dotuline{$0.805 $} & $\pm 0.006$ & \dotuline{$0.807 $} & $\pm 0.009$ & $0.815 $ & $\pm 0.007$ \\
SCOP-GO             & $0.778 $ & $\pm 0.005$ & $0.782 $ & $\pm 0.004$ & $0.782 $ & $\pm 0.006$ & $0.778 $ & $\pm 0.005$ \\
delicious           & $0.354 $ & $\pm 0.003$ & {$\bf 0.36  $} & $\pm 0.004$ & {$\bf 0.358 $} & $\pm 0.004$ & $0.355 $ & $\pm 0.003$ \\
drug-interaction    & $0.353 $ & $\pm 0.011$ & {$\bf 0.375 $} & $\pm 0.017$ & $0.364 $ & $\pm 0.014$ & $0.355 $ & $\pm 0.016$ \\
protein-interaction & $0.299 $ & $\pm 0.013$ & $0.307 $ & $\pm 0.009$ & $0.305 $ & $\pm 0.012$ & $0.306 $ & $\pm 0.017$ \\
Expression-GO       & $0.231 $ & $\pm 0.007$ & \dotuline{$0.218 $} & $\pm 0.005$ & $0.228 $ & $\pm 0.005$ & $0.235 $ & $\pm 0.005$ \\
\bottomrule
\end{tabular}
\end{footnotesize}
\label{tab:extra_trees_results_summary}
\end{table*}

\section{Input vs output space randomization on ``Delicious''}

In this section, we carry out the same experiment as in Section 4.4 of
the main paper but on the ``Delicious''
dataset. Figure~\ref{fig:k_delicious_gaussian} shows the evolution of
the accuracy for growing values of $k$ (i.e. decreasing strength of
the input space randomization), for three different values of $m$ (in
this case $m \in \left\{1, \ln{d}, 2 \ln{d}\right\}$) on a Gaussian
output space.

Like on ``Drug-interaction'' (see Figure 2 of the paper), using
low-dimensional output spaces makes the method more robust with
respect to over-fitting as $k$ increases. However, unlike on
``Drug-interaction'', it is not really possible to improve over
baseline Random Forests by tuning jointly input and output
randomization. This shows that the interaction between $m$ and $k$ may
be different from one dataset to another.

\begin{figure}[htb]
\centering
\includegraphics[width=0.45\textwidth]{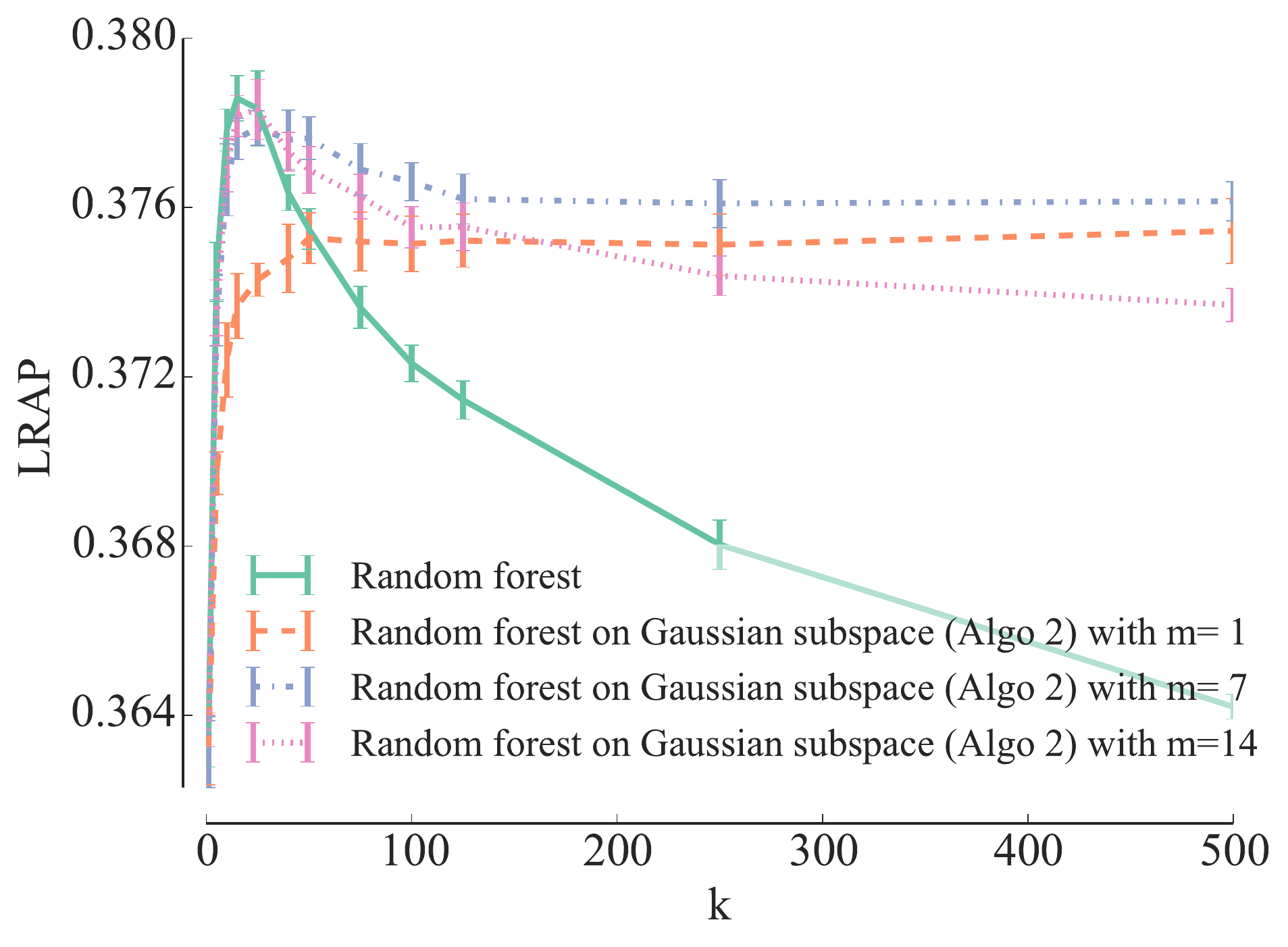}
\caption{``Delicious''  dataset: $n_{\min}=1$; $t=100$.}
\label{fig:k_delicious_gaussian}
\end{figure}

\end{document}